\theoremstyle{plain}
\newtheorem{theorem}{Theorem}[section]
\theoremstyle{definition}
\theoremstyle{remark}
\definecolor{mydarkblue}{rgb}{0,0.08,0.45}
\pgfplotsset{compat=1.3}
\icmltitlerunning{La RoSA: Enhancing LLM Efficiency via \underline{La}yerwise \underline{Ro}tated \underline{S}parse \underline{A}ctivation}
\begin{document}

\twocolumn[

\icmltitle{
La RoSA: Enhancing LLM Efficiency via \underline{La}yerwise \underline{Ro}tated \underline{S}parse \underline{A}ctivation}



\icmlsetsymbol{equal}{*}

\begin{icmlauthorlist}

\icmlauthor{Kai Liu}{equal,comp}
\icmlauthor{Bowen Xu}{equal,comp}
\icmlauthor{Shaoyu Wu}{comp}
\icmlauthor{Xin Chen}{comp}
\icmlauthor{Hao Zhou}{comp}
\icmlauthor{Yongliang Tao}{comp}
\icmlauthor{Lulu Hu}{comp}

\end{icmlauthorlist}

\icmlaffiliation{comp}{Alibaba Group}

\icmlcorrespondingauthor{Kai Liu}{kane.liukai@gmail.com}
\icmlcorrespondingauthor{Bowen Xu}{bowiehsu1024@gmail.com}

\icmlkeywords{Machine Learning, ICML}

\vskip 0.3in
]



\printAffiliationsAndNotice{\icmlEqualContribution} 

\newcommand{\name}{\mbox{LaRoSA}\xspace}
\newcommand{\names}{\mbox{LaRoSA}\xspace}
\newcommand{\algnamenospace}{LaRoSA}
\newcommand{\algname}{LaRoSA}

\newcommand{\idcs}{\text{idxs}}

\newcommand{\Wq}{\mathbf{W}_\text{q}}
\newcommand{\Wk}{\mathbf{W}_\text{k}}
\newcommand{\Wv}{\mathbf{W}_\text{v}}
\newcommand{\Wo}{\mathbf{W}_\text{o}}
\newcommand{\Wdown}{\mathbf{W}_\text{down}}
\newcommand{\Wgate}{\mathbf{W}_\text{gate}}
\newcommand{\Wup}{\mathbf{W}_\text{up}}
\newcommand{\Win}{\mathbf{W}_\text{in}}
\newcommand{\Wout}{\mathbf{W}_\text{out}}

\newcommand{\Q}{\mathbf{Q}}
\newcommand{\Qt}{\mathbf{Q}^\top}
\newcommand{\R}{\mathbb{R}}

\newcommand{\silu}{\text{SiLU}}





\def\figref#1{figure~\ref{#1}}
\def\Figref#1{Figure~\ref{#1}}
\def\twofigref#1#2{figures \ref{#1} and \ref{#2}}
\def\quadfigref#1#2#3#4{figures \ref{#1}, \ref{#2}, \ref{#3} and \ref{#4}}
\def\secref#1{section~\ref{#1}}
\def\Secref#1{Section~\ref{#1}}
\def\twosecrefs#1#2{sections \ref{#1} and \ref{#2}}
\def\secrefs#1#2#3{sections \ref{#1}, \ref{#2} and \ref{#3}}
\def\eqref#1{equation~\ref{#1}}
\def\Eqref#1{Equation~\ref{#1}}
\def\plaineqref#1{\ref{#1}}
\def\chapref#1{chapter~\ref{#1}}
\def\Chapref#1{Chapter~\ref{#1}}
\def\rangechapref#1#2{chapters\ref{#1}--\ref{#2}}
\def\algref#1{algorithm~\ref{#1}}
\def\Algref#1{Algorithm~\ref{#1}}
\def\twoalgref#1#2{algorithms \ref{#1} and \ref{#2}}
\def\Twoalgref#1#2{Algorithms \ref{#1} and \ref{#2}}
\def\partref#1{part~\ref{#1}}
\def\Partref#1{Part~\ref{#1}}
\def\twopartref#1#2{parts \ref{#1} and \ref{#2}}

\def\ceil#1{\lceil #1 \rceil}
\def\floor#1{\lfloor #1 \rfloor}
\def\1{\bm{1}}

\def\eps{{\epsilon}}

\def\reta{{\textnormal{$\eta$}}}
\def\ra{{\textnormal{a}}}
\def\rb{{\textnormal{b}}}
\def\rc{{\textnormal{c}}}
\def\rd{{\textnormal{d}}}
\def\re{{\textnormal{e}}}
\def\rf{{\textnormal{f}}}
\def\rg{{\textnormal{g}}}
\def\rh{{\textnormal{h}}}
\def\ri{{\textnormal{i}}}
\def\rj{{\textnormal{j}}}
\def\rk{{\textnormal{k}}}
\def\rl{{\textnormal{l}}}
\def\rn{{\textnormal{n}}}
\def\ro{{\textnormal{o}}}
\def\rp{{\textnormal{p}}}
\def\rq{{\textnormal{q}}}
\def\rr{{\textnormal{r}}}
\def\rs{{\textnormal{s}}}
\def\rt{{\textnormal{t}}}
\def\ru{{\textnormal{u}}}
\def\rv{{\textnormal{v}}}
\def\rw{{\textnormal{w}}}
\def\rx{{\textnormal{x}}}
\def\ry{{\textnormal{y}}}
\def\rz{{\textnormal{z}}}

\def\rvepsilon{{\mathbf{\epsilon}}}
\def\rvtheta{{\mathbf{\theta}}}
\def\rva{{\mathbf{a}}}
\def\rvb{{\mathbf{b}}}
\def\rvc{{\mathbf{c}}}
\def\rvd{{\mathbf{d}}}
\def\rve{{\mathbf{e}}}
\def\rvf{{\mathbf{f}}}
\def\rvg{{\mathbf{g}}}
\def\rvh{{\mathbf{h}}}
\def\rvu{{\mathbf{i}}}
\def\rvj{{\mathbf{j}}}
\def\rvk{{\mathbf{k}}}
\def\rvl{{\mathbf{l}}}
\def\rvm{{\mathbf{m}}}
\def\rvn{{\mathbf{n}}}
\def\rvo{{\mathbf{o}}}
\def\rvp{{\mathbf{p}}}
\def\rvq{{\mathbf{q}}}
\def\rvr{{\mathbf{r}}}
\def\rvs{{\mathbf{s}}}
\def\rvt{{\mathbf{t}}}
\def\rvu{{\mathbf{u}}}
\def\rvv{{\mathbf{v}}}
\def\rvw{{\mathbf{w}}}
\def\rvx{{\mathbf{x}}}
\def\rvy{{\mathbf{y}}}
\def\rvz{{\mathbf{z}}}

\def\erva{{\textnormal{a}}}
\def\ervb{{\textnormal{b}}}
\def\ervc{{\textnormal{c}}}
\def\ervd{{\textnormal{d}}}
\def\erve{{\textnormal{e}}}
\def\ervf{{\textnormal{f}}}
\def\ervg{{\textnormal{g}}}
\def\ervh{{\textnormal{h}}}
\def\ervi{{\textnormal{i}}}
\def\ervj{{\textnormal{j}}}
\def\ervk{{\textnormal{k}}}
\def\ervl{{\textnormal{l}}}
\def\ervm{{\textnormal{m}}}
\def\ervn{{\textnormal{n}}}
\def\ervo{{\textnormal{o}}}
\def\ervp{{\textnormal{p}}}
\def\ervq{{\textnormal{q}}}
\def\ervr{{\textnormal{r}}}
\def\ervs{{\textnormal{s}}}
\def\ervt{{\textnormal{t}}}
\def\ervu{{\textnormal{u}}}
\def\ervv{{\textnormal{v}}}
\def\ervw{{\textnormal{w}}}
\def\ervx{{\textnormal{x}}}
\def\ervy{{\textnormal{y}}}
\def\ervz{{\textnormal{z}}}

\def\rmA{{\mathbf{A}}}
\def\rmB{{\mathbf{B}}}
\def\rmC{{\mathbf{C}}}
\def\rmD{{\mathbf{D}}}
\def\rmE{{\mathbf{E}}}
\def\rmF{{\mathbf{F}}}
\def\rmG{{\mathbf{G}}}
\def\rmH{{\mathbf{H}}}
\def\rmI{{\mathbf{I}}}
\def\rmJ{{\mathbf{J}}}
\def\rmK{{\mathbf{K}}}
\def\rmL{{\mathbf{L}}}
\def\rmM{{\mathbf{M}}}
\def\rmN{{\mathbf{N}}}
\def\rmO{{\mathbf{O}}}
\def\rmP{{\mathbf{P}}}
\def\rmQ{{\mathbf{Q}}}
\def\rmR{{\mathbf{R}}}
\def\rmS{{\mathbf{S}}}
\def\rmT{{\mathbf{T}}}
\def\rmU{{\mathbf{U}}}
\def\rmV{{\mathbf{V}}}
\def\rmW{{\mathbf{W}}}
\def\rmX{{\mathbf{X}}}
\def\rmY{{\mathbf{Y}}}
\def\rmZ{{\mathbf{Z}}}

\def\ermA{{\textnormal{A}}}
\def\ermB{{\textnormal{B}}}
\def\ermC{{\textnormal{C}}}
\def\ermD{{\textnormal{D}}}
\def\ermE{{\textnormal{E}}}
\def\ermF{{\textnormal{F}}}
\def\ermG{{\textnormal{G}}}
\def\ermH{{\textnormal{H}}}
\def\ermI{{\textnormal{I}}}
\def\ermJ{{\textnormal{J}}}
\def\ermK{{\textnormal{K}}}
\def\ermL{{\textnormal{L}}}
\def\ermM{{\textnormal{M}}}
\def\ermN{{\textnormal{N}}}
\def\ermO{{\textnormal{O}}}
\def\ermP{{\textnormal{P}}}
\def\ermQ{{\textnormal{Q}}}
\def\ermR{{\textnormal{R}}}
\def\ermS{{\textnormal{S}}}
\def\ermT{{\textnormal{T}}}
\def\ermU{{\textnormal{U}}}
\def\ermV{{\textnormal{V}}}
\def\ermW{{\textnormal{W}}}
\def\ermX{{\textnormal{X}}}
\def\ermY{{\textnormal{Y}}}
\def\ermZ{{\textnormal{Z}}}

\def\vzero{{\bm{0}}}
\def\vone{{\bm{1}}}
\def\vmu{{\bm{\mu}}}
\def\vtheta{{\bm{\theta}}}
\def\va{{\bm{a}}}
\def\vb{{\bm{b}}}
\def\vc{{\bm{c}}}
\def\vd{{\bm{d}}}
\def\ve{{\bm{e}}}
\def\vf{{\bm{f}}}
\def\vg{{\bm{g}}}
\def\vh{{\bm{h}}}
\def\vi{{\bm{i}}}
\def\vj{{\bm{j}}}
\def\vk{{\bm{k}}}
\def\vl{{\bm{l}}}
\def\vm{{\bm{m}}}
\def\vn{{\bm{n}}}
\def\vo{{\bm{o}}}
\def\vp{{\bm{p}}}
\def\vq{{\bm{q}}}
\def\vr{{\bm{r}}}
\def\vs{{\bm{s}}}
\def\vt{{\bm{t}}}
\def\vu{{\bm{u}}}
\def\vv{{\bm{v}}}
\def\vw{{\bm{w}}}
\def\vx{{\bm{x}}}
\def\vy{{\bm{y}}}
\def\vz{{\bm{z}}}

\def\evalpha{{\alpha}}
\def\evbeta{{\beta}}
\def\evepsilon{{\epsilon}}
\def\evlambda{{\lambda}}
\def\evomega{{\omega}}
\def\evmu{{\mu}}
\def\evpsi{{\psi}}
\def\evsigma{{\sigma}}
\def\evtheta{{\theta}}
\def\eva{{a}}
\def\evb{{b}}
\def\evc{{c}}
\def\evd{{d}}
\def\eve{{e}}
\def\evf{{f}}
\def\evg{{g}}
\def\evh{{h}}
\def\evi{{i}}
\def\evj{{j}}
\def\evk{{k}}
\def\evl{{l}}
\def\evm{{m}}
\def\evn{{n}}
\def\evo{{o}}
\def\evp{{p}}
\def\evq{{q}}
\def\evr{{r}}
\def\evs{{s}}
\def\evt{{t}}
\def\evu{{u}}
\def\evv{{v}}
\def\evw{{w}}
\def\evx{{x}}
\def\evy{{y}}
\def\evz{{z}}

\def\mA{{\mathbf{A}}}
\def\mB{{\mathbf{B}}}
\def\mC{{\mathbf{C}}}
\def\mD{{\mathbf{D}}}
\def\mE{{\mathbf{E}}}
\def\mF{{\mathbf{F}}}
\def\mG{{\mathbf{G}}}
\def\mH{{\mathbf{H}}}
\def\mI{{\mathbf{I}}}
\def\mJ{{\mathbf{J}}}
\def\mK{{\mathbf{K}}}
\def\mL{{\mathbf{L}}}
\def\mM{{\mathbf{M}}}
\def\mN{{\mathbf{N}}}
\def\mO{{\mathbf{O}}}
\def\mP{{\mathbf{P}}}
\def\mQ{{\mathbf{Q}}}
\def\mR{{\mathbf{R}}}
\def\mS{{\mathbf{S}}}
\def\mT{{\mathbf{T}}}
\def\mU{{\mathbf{U}}}
\def\mV{{\mathbf{V}}}
\def\mW{{\mathbf{W}}}
\def\mX{{\mathbf{X}}}
\def\mY{{\mathbf{Y}}}
\def\mZ{{\mathbf{Z}}}
\def\mBeta{{\mathbf{\beta}}}
\def\mPhi{{\mathbf{\Phi}}}
\def\mLambda{{\mathbf{\Lambda}}}
\def\mSigma{{\mathbf{\Sigma}}}

\def\tA{{\tens{A}}}
\def\tB{{\tens{B}}}
\def\tC{{\tens{C}}}
\def\tD{{\tens{D}}}
\def\tE{{\tens{E}}}
\def\tF{{\tens{F}}}
\def\tG{{\tens{G}}}
\def\tH{{\tens{H}}}
\def\tI{{\tens{I}}}
\def\tJ{{\tens{J}}}
\def\tK{{\tens{K}}}
\def\tL{{\tens{L}}}
\def\tM{{\tens{M}}}
\def\tN{{\tens{N}}}
\def\tO{{\tens{O}}}
\def\tP{{\tens{P}}}
\def\tQ{{\tens{Q}}}
\def\tR{{\tens{R}}}
\def\tS{{\tens{S}}}
\def\tT{{\tens{T}}}
\def\tU{{\tens{U}}}
\def\tV{{\tens{V}}}
\def\tW{{\tens{W}}}
\def\tX{{\tens{X}}}
\def\tY{{\tens{Y}}}
\def\tZ{{\tens{Z}}}

\def\gA{{\mathcal{A}}}
\def\gB{{\mathcal{B}}}
\def\gC{{\mathcal{C}}}
\def\gD{{\mathcal{D}}}
\def\gE{{\mathcal{E}}}
\def\gF{{\mathcal{F}}}
\def\gG{{\mathcal{G}}}
\def\gH{{\mathcal{H}}}
\def\gI{{\mathcal{I}}}
\def\gJ{{\mathcal{J}}}
\def\gK{{\mathcal{K}}}
\def\gL{{\mathcal{L}}}
\def\gM{{\mathcal{M}}}
\def\gN{{\mathcal{N}}}
\def\gO{{\mathcal{O}}}
\def\gP{{\mathcal{P}}}
\def\gQ{{\mathcal{Q}}}
\def\gR{{\mathcal{R}}}
\def\gS{{\mathcal{S}}}
\def\gT{{\mathcal{T}}}
\def\gU{{\mathcal{U}}}
\def\gV{{\mathcal{V}}}
\def\gW{{\mathcal{W}}}
\def\gX{{\mathcal{X}}}
\def\gY{{\mathcal{Y}}}
\def\gZ{{\mathcal{Z}}}

\def\sA{{\mathbb{A}}}
\def\sB{{\mathbb{B}}}
\def\sC{{\mathbb{C}}}
\def\sD{{\mathbb{D}}}
\def\sF{{\mathbb{F}}}
\def\sG{{\mathbb{G}}}
\def\sH{{\mathbb{H}}}
\def\sI{{\mathbb{I}}}
\def\sJ{{\mathbb{J}}}
\def\sK{{\mathbb{K}}}
\def\sL{{\mathbb{L}}}
\def\sM{{\mathbb{M}}}
\def\sN{{\mathbb{N}}}
\def\sO{{\mathbb{O}}}
\def\sP{{\mathbb{P}}}
\def\sQ{{\mathbb{Q}}}
\def\sR{{\mathbb{R}}}
\def\sS{{\mathbb{S}}}
\def\sT{{\mathbb{T}}}
\def\sU{{\mathbb{U}}}
\def\sV{{\mathbb{V}}}
\def\sW{{\mathbb{W}}}
\def\sX{{\mathbb{X}}}
\def\sY{{\mathbb{Y}}}
\def\sZ{{\mathbb{Z}}}

\def\emLambda{{\Lambda}}
\def\emA{{A}}
\def\emB{{B}}
\def\emC{{C}}
\def\emD{{D}}
\def\emE{{E}}
\def\emF{{F}}
\def\emG{{G}}
\def\emH{{H}}
\def\emI{{I}}
\def\emJ{{J}}
\def\emK{{K}}
\def\emL{{L}}
\def\emM{{M}}
\def\emN{{N}}
\def\emO{{O}}
\def\emP{{P}}
\def\emQ{{Q}}
\def\emR{{R}}
\def\emS{{S}}
\def\emT{{T}}
\def\emU{{U}}
\def\emV{{V}}
\def\emW{{W}}
\def\emX{{X}}
\def\emY{{Y}}
\def\emZ{{Z}}
\def\emSigma{{\Sigma}}

\def\etLambda{{\etens{\Lambda}}}
\def\etA{{\etens{A}}}
\def\etB{{\etens{B}}}
\def\etC{{\etens{C}}}
\def\etD{{\etens{D}}}
\def\etE{{\etens{E}}}
\def\etF{{\etens{F}}}
\def\etG{{\etens{G}}}
\def\etH{{\etens{H}}}
\def\etI{{\etens{I}}}
\def\etJ{{\etens{J}}}
\def\etK{{\etens{K}}}
\def\etL{{\etens{L}}}
\def\etM{{\etens{M}}}
\def\etN{{\etens{N}}}
\def\etO{{\etens{O}}}
\def\etP{{\etens{P}}}
\def\etQ{{\etens{Q}}}
\def\etR{{\etens{R}}}
\def\etS{{\etens{S}}}
\def\etT{{\etens{T}}}
\def\etU{{\etens{U}}}
\def\etV{{\etens{V}}}
\def\etW{{\etens{W}}}
\def\etX{{\etens{X}}}
\def\etY{{\etens{Y}}}
\def\etZ{{\etens{Z}}}

\let\ab\allowbreak

\begin{abstract}
Activation sparsity can reduce the computational overhead and memory transfers during the forward pass of Large Language Model (LLM) inference. Existing methods face limitations, either demanding time-consuming recovery training that hinders real-world adoption, or relying on empirical magnitude-based pruning, which causes fluctuating sparsity and unstable inference speed-up. This paper introduces LaRoSA (\textbf{La}yerwise \textbf{Ro}tated \textbf{S}parse \textbf{A}ctivation), a novel method for activation sparsification designed to improve LLM efficiency without requiring additional training or magnitude-based pruning. We leverage layerwise orthogonal rotations to transform input activations into rotated forms that are more suitable for sparsification. By employing a Top-K selection approach within the rotated activations, we achieve consistent model-level sparsity and reliable wall-clock time speed-up. LaRoSA is effective across various sizes and types of LLMs, demonstrating minimal performance degradation and robust inference acceleration. Specifically, for LLaMA2-7B at 40\% sparsity, LaRoSA achieves a mere 0.17 perplexity gap with a consistent 1.30× wall-clock time speed-up, and reduces the accuracy gap in zero-shot tasks compared to the dense model to just 0.54\%, while surpassing TEAL by 1.77\% and CATS by 17.14\%. Source code: \url{https://github.com/alibaba/EfficientAI}.
\vspace{-5mm}
\end{abstract}

\section{Introduction}

Large Language Models (LLMs) have achieved significant advancements across various real-world machine learning applications and leaderboards~\citep{deepseekr1,openai2024gpt4technicalreport,llama3,qwen2}. However, billions of parameters in these models pose significant challenges for efficient LLM inference and serving, especially in terms of memory footprint and computational overhead. Recent efforts on boosting LLM inference have predominantly focused on low-bit model quantization~\citep{frantar-gptq,lin2023awq,ashkboos2024quarot,liu2024spinquant,lin2024duquant} and model weights pruning methods~\citep{frantar-sparsegpt,ashkboos2024slicegpt,ma2023llmpruner,fang2024maskllm}. Such techniques enhance inference speed and save memory usage by either employing low-precision parameter formats or reducing the total number of parameters. Exploiting activation sparsity~\citep{pmlr-v119-FATReLU,Li2022TheLN,Chen2023SparseViT} is another promising approach to improve inference efficiency. Since channels in the weight matrix corresponding to zero-valued activations are not utilized during the forward pass, LLM inference can be accelerated by: \textbf{(1) avoiding transfer such parameters between memory and computation units}, and \textbf{(2) omitting the matrix computations associated with those weights channels.} These speed-ups could be achieved by utilizing sparse patterns that are compatible with hardware. 
\begin{figure}[t]
\includegraphics[width=0.51\textwidth]{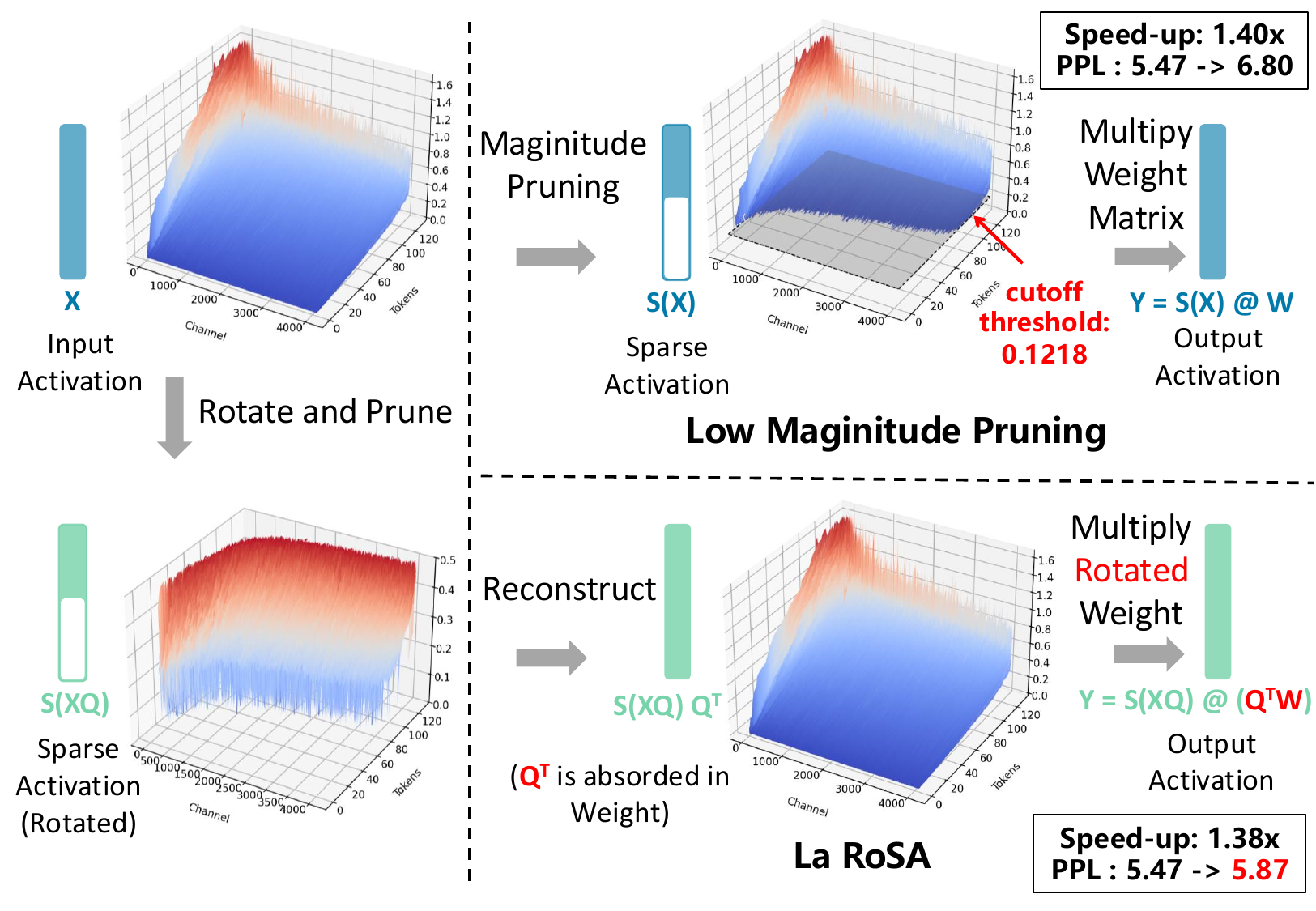}
\caption{%
\textbf{\name} rotate and prune the input activations to achieve more consistent and efficient LLM inference. The rotation is orthogonal and can be reversed after pruning. The overall method provide a more accurate approximation of the full computation. Visualizations are from the 12th Attention block of LLaMA2-7B.
\label{fig:introduction}}
\vspace{-5mm}
\end{figure}

Recent studies~\citep{reluSB,dejavu,Zhang2024ReLU2WD} have observed a high degree of activation sparsity in the ReLU-based Transformers. Based on this, Dejavu~\citep{dejavu} trains a predictor for each layer to preemptively identify weight channels corresponding to zero-valued activations. ~\citep{reluSB} induce high sparsity by replacing non-ReLU activations with ReLU. However, these methods do not achieve full sparsity in the models and require substantial additional training to recover model performance.
Some other works have turned their attention to magnitude-based pruning. CATS~\citep{lee2024cats} utilizes empirical distribution histogram to obtain the cutoff threshold in MLP blocks, and zero out activations below that threshold. TEAL~\citep{liu2024teal} further extends this concept to all input activations in both Attention and MLP blocks. However, offline cutoff thresholds are heavily related to calibration tokens, and lack accuracy and consistence in generating optimal sparse patterns for unseen tokens to achieve a target sparsity level. Moreover, inconsistent activation sparsity can lead to unstable inference speed-ups, reducing efficiency in LLM inference. 
Driven by these limitations, we raise the following question: \textit{Is it possible to \textbf{quickly, effectively, and consistently} leverage activation sparsity to enhance the efficiency of LLMs \textbf{without additional training or empirical thresholds}?}

In this work, we present \name(\textbf{La}yerwise \textbf{Ro}tated \textbf{S}parse \textbf{A}ctivation), a training-free activation sparsification method that produces optimal sparse patterns and consistent generation speed-up for LLM inference. As depicted in~\Cref{fig:introduction}, our method uses an orthogonal rotation matrix $\mQ$ to transform the input activation, enabling more effective activation sparsification. We apply PCA to obtain the rotation matrix $\mQ$ for each layer and employ a Top-K function to acquire stable activation sparsity. A residual adapter is added to realize layer-specific independent rotations. In the end, the rotation matrix $\mQ$ can be absorbed by the corresponding weight matrix, eliminating the additional computational overhead introduced by $\mQ$. Our main contributions are:
\begin{itemize}
\vspace{-2mm}
\item We show that layerwise orthogonal rotation matrices can be applied to improve the performance and efficiency of LLMs without extra training or empirical thresholds.
\vspace{-2mm}
\item We introduce consistent model-level sparsity to non-ReLU based LLMs by applying Top-K functions on the rotated activations and realize more stable and efficient model inference than SOTA methods.
\vspace{-2mm}
\item We conduct extensive experiments with leading LLMs and demonstrate that \name is effective and robust across different types, sizes, and sparsity levels. \name presents minimal performance degradation while providing consistent wall-clock time speed-up.
\vspace{-2mm}
\end{itemize}

\section{Related Work}
\textbf{Activation Sparsity }  
Activation sparsity refers to the phenomenon where a considerable portion of a model's hidden states are zero-valued. This property naturally emerges in the intermediate states of ReLU-based MLP blocks~\citep{Li2022TheLN}. \citet{dejavu} utilized activation sparsity to accelerate LLM decoding speed by omitting the transfer of weight channels associated with zero-valued entries to GPU registers. Furthermore, \citet{song2023powerinfer} and \citet{alizadeh2024llm_in_a_flash} leveraged activation sparsity to facilitate CPU offloading, significantly reducing memory transfer overhead between CPUs and GPUs. However, it is noteworthy that modern leading LLMs~\citep{llama3, qwen2} utilize more complex nonlinear functions (e.g., SwiGLU~\citep{shazeer2020SwiGLU}) which do not inherently produce sparsity~\citep{reluSB}. Consequently, most of the ReLU-based activation sparsification methods are not applicable to these models.

\textbf{Inducing Sparsity in non-ReLU Models }  
\citet{reluSB} replaced SwiGLU with ReLU in LLMs, followed by continued pre-training to induce activation sparsity. \citet{song2025prosparse} and \citet{song2024turbosparse} introduced techniques such as activation regularization to further enhance sparsity in adapted models. \citet{Zhang2024ReLU2WD} experimented with various activation functions and identified Squared ReLU~\citep{squared_relu} as the most effective alternative. CATS~\citep{lee2024cats} realizes a desired sparsity level in SwiGLU based MLP output activations through empirical distribution and magnitude thresholding. 
TEAL~\citep{liu2024teal} further extends this concept to the input activations of self-attention and MLP blocks, and prunes low-magnitude activations. \citet{wang2024qsparses} combines magnitude pruning and quantized activations and presents scaling laws for sparsely activated LLMs. However, these methods either rely on substantial post-training and calibration, or produce unstable model-level sparsity, resulting in sparse activated LLMs that lack efficiency and flexibility.

\textbf{Rotation-based Model Compression }  
Recent works have focused on using orthogonal rotation matrices to mitigate performance drops in LLM pruning and quantization. SliceGPT~\citep{ashkboos2024slicegpt} multiplies each weight matrix by an orthogonal matrix to reduce the embedding dimension, and use a residual adapter to achieve computational invariance. QuaRot~\citep{ashkboos2024quarot} use random Hadamard rotation to redistribute outliers across all channels. SpinQuant~\citep{liu2024spinquant} suggests training the orthogonal matrix instead of using a random Hadamard matrix to further enhance quantizability. 
\begin{figure*}[th]
\centering
\includegraphics[width=\textwidth]{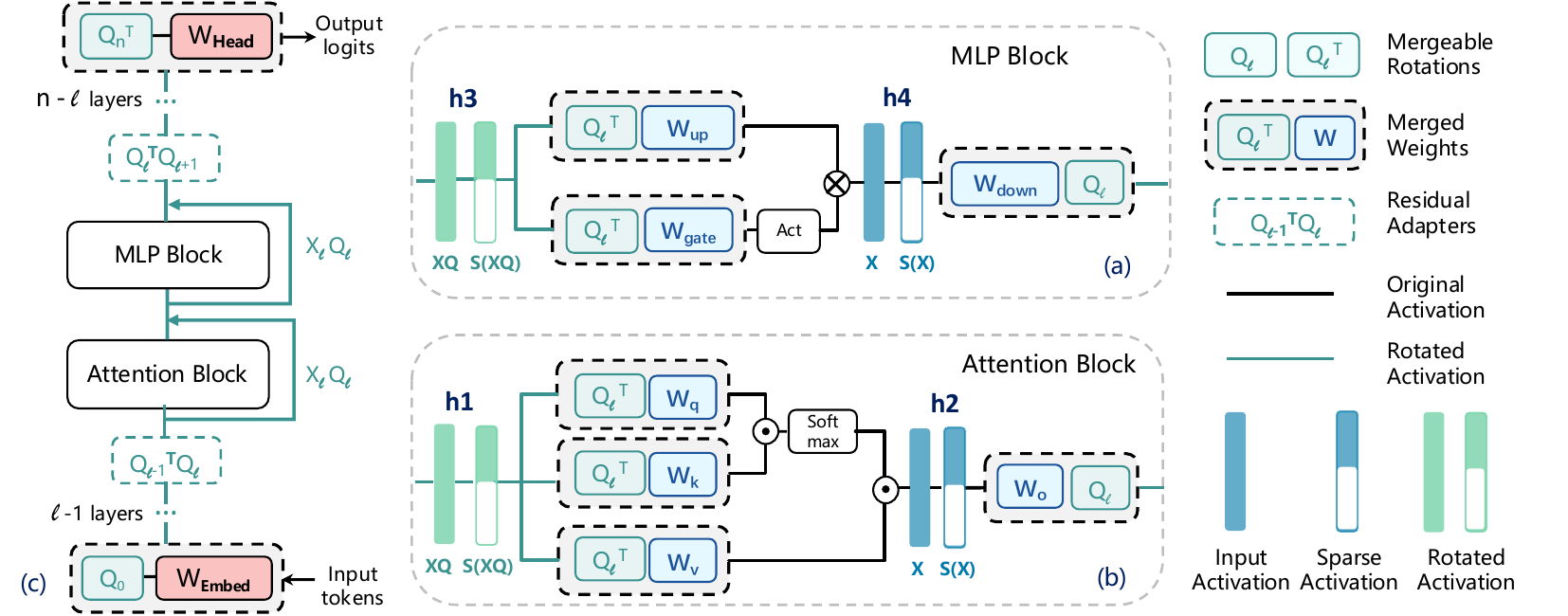}
\caption{%
Model inference with LaRoSA. \textbf{(a)\&(b)}  The layerwise orthogonal matrix $\mQ_l$ can be absorbed into weight matrices to avoid extra computations, ensuring that the input activations for each block are pre-transformed by the layer-specific orthogonal matrix $\mQ_l$.
\textbf{(c) } Introducing residual adapters in the residual stream ensures that each layer's input activations have independent orthogonal rotations. The  $\mQ_0$ of first layer and $\mQ_n$ of last layer can be merged into the weight matrix of token embedding and LM head layer, respectively.
\label{fig:main_framework}}
\end{figure*}

\section{Preliminary}
\label{sec:Preliminary}
\subsection{Activation Sparsity in LLMs}
In this work, we focus on the Transformer architecture LLMs which are composed of repeated decoder layers~\citep{llama2}. In subsequent sections, unless otherwise specified, a `layer' refers to the decoder layer that repeatedly appears after the token embedding layer, and a `block' refers to an Attention block or a MLP block from a decoder layer. We assume that the network is constructed using a pre-norm approach, meaning that a LayerNorm or RMSNorm operation precedes each block. For clarity, $\vh_1$ and $\vh_2$ refer to the Attention block input activations, and $\vh_3$ and $\vh_4$ to the MLP block input activations. \Cref{fig:main_framework} illustrates the construction of such blocks and layers.  Typically, the forward pass in each block is performed by linear mappings between input tensors and weight matrices, which can be expressed as:
\begin{equation}
\label{eq:linear_mapping}
        \mY = \mX  \cdot \mW^T ,
\end{equation}
where $\mX \in \R^{N \times D_\text{in}}$ denote the input tensor of $N$ tokens, $ \mW \in \R^{D_\text{in} \times D_\text{out}}$  is the weight matrix, and $ \mY \in \R^{N \times  D_\text{out}}$ is the output tensor. $D_\text{in}$ and $D_\text{out}$ are hidden sizes of input and output tensor. The matrix multiplication shown in \Cref{eq:linear_mapping} is identical for each $\Wq$, $\Wk$, $\Wv$, $\Wo$ in Attention blocks and $\Wup$, $\Wgate$, $\Wdown$ in MLP blocks. Let $ \vx \in \R^{D_\text{in}}$ denotes the input activation of a single token from $\mX$. The activation sparsity $p \in [0,1]$ of $\vx$ is defined as the proportion of zero-valued elements:
\begin{equation}
p = \frac{1}{D} \sum_{i=1}^D \mathbf{1}(\vx_i = 0),
\label{eq:standard_Sparsity}
\end{equation} 
where $x_i$ is the i-th element of $\vx $, and the indicator function $\mathbf{1}(\vx_i=0)$ outputs 1 if the activation value is equal to zero and 0 otherwise. As highlighted in ~\citep{reluSB}, non-ReLU based LLMs inherently exhibit extremely low activation sparsity. Recent studies \citep{lee2024cats,liu2024teal} have assumed that low-magnitude activations exert minimal effects on model outputs and have shown that magnitude-based pruning is empirically effective. Typically, the activation sparsity ins these methods is defined as:
\begin{equation}
p = \frac{1}{D} \sum_{i=1}^D \mathbf{1}(\vx_i \leqslant \epsilon),
\label{eq:teal_sparsity}
\end{equation} 
where $\epsilon$ is a cutoff threshold estimated from the empirical cumulative distribution of activations, obtained offline from calibration datasets. The linear mappings in magnitude pruning method is then formulated by:
\begin{equation}
\label{eq:teal_linear_mapping}
        \mY = S_{\epsilon} (\mX ) \cdot \mW ^T ,
\end{equation}
where $S_{\epsilon}(\cdot)$ is the sparsification function defined as:
\begin{equation}
\label{eq:teal_sparse_function}
S_{\epsilon}(\vx_i) = 
\begin{cases}
0 , & \text{if } |\vx_i| \leq \epsilon \\
\vx_i , & \text{otherwise},
\end{cases} 
\end{equation}
\begin{figure*}[ht]
\centering
\begin{subfigure}{0.25\textwidth}
\centering
  \includegraphics[width=\textwidth]{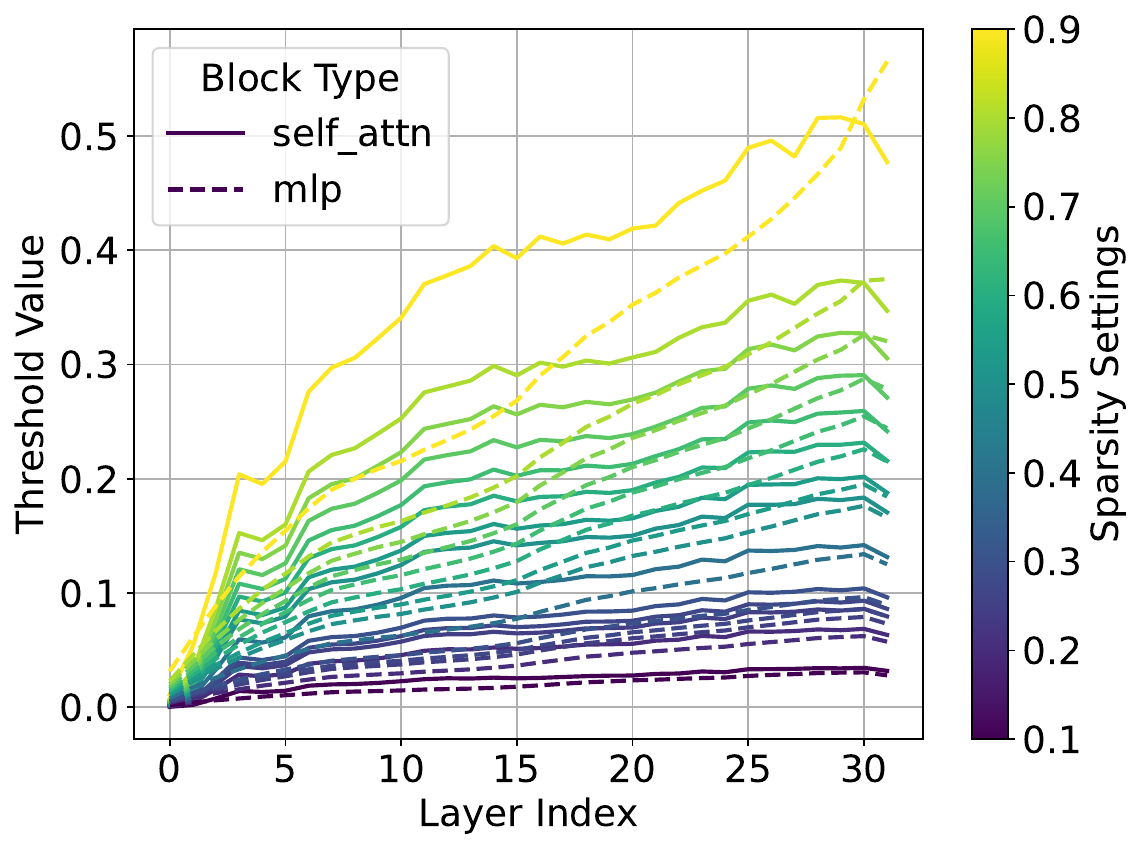}
  \caption{Magnitude Pruning Thresholds}
  \label{fig:motivation_a}
\end{subfigure}
\begin{subfigure}{0.25\textwidth}
  \centering
  \includegraphics[width=\textwidth]{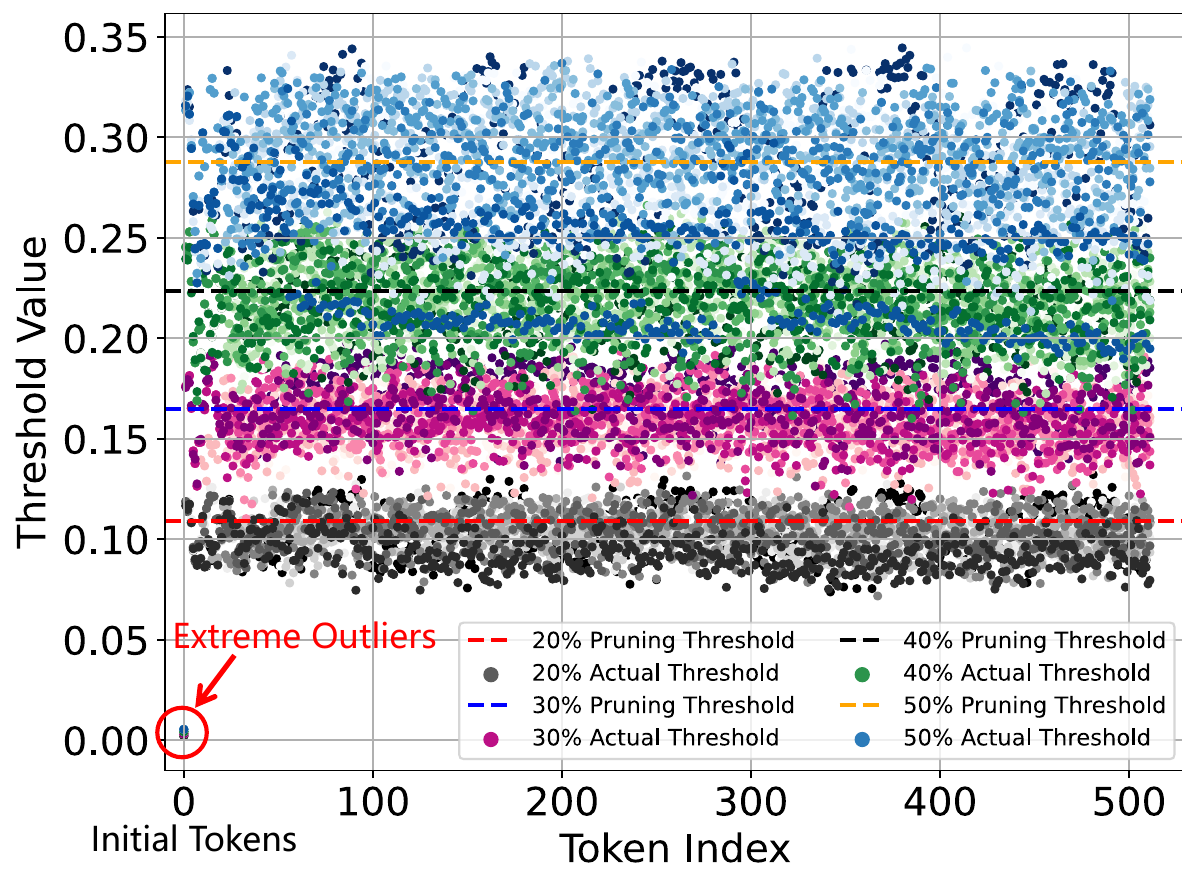}
   \caption{Actual Thresholds}
  \label{fig:motivation_b}
\end{subfigure}
\begin{subfigure}{0.23\textwidth}
  \centering
  \includegraphics[width=\textwidth]{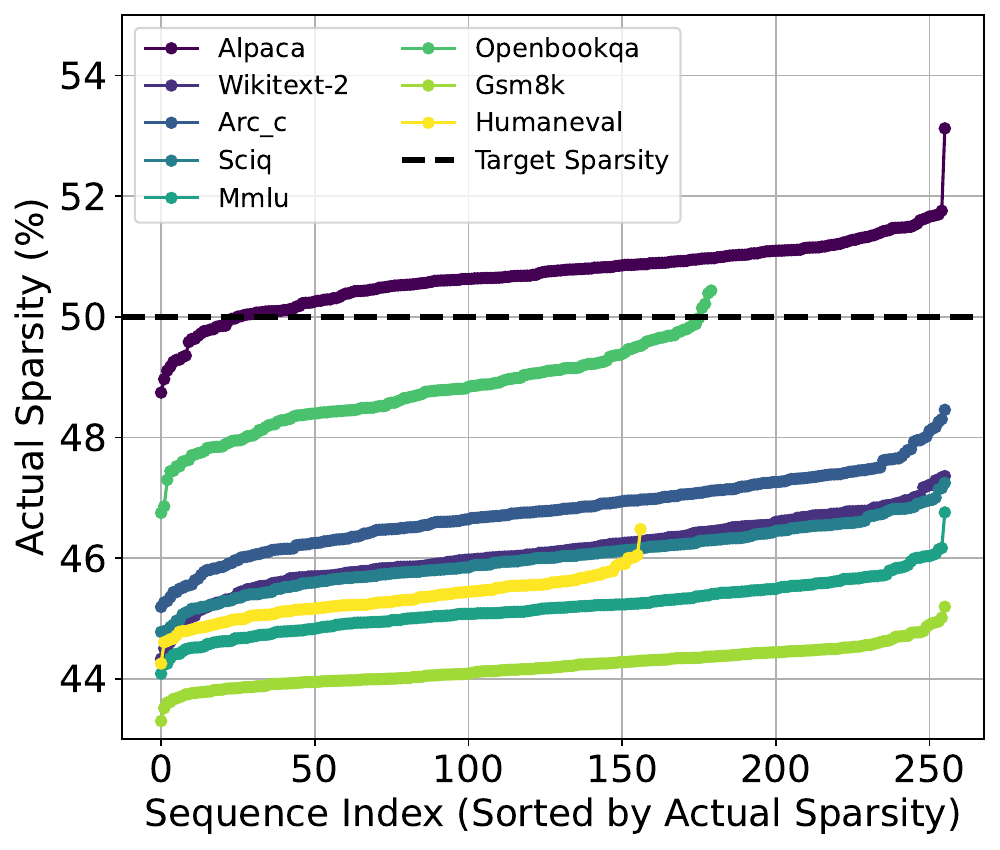}
   \caption{Inconsistent Actual Sparsity}
   \label{fig:motivation_c}
\end{subfigure}
\begin{subfigure}{0.23\textwidth}
  \centering
  \includegraphics[width=\textwidth]{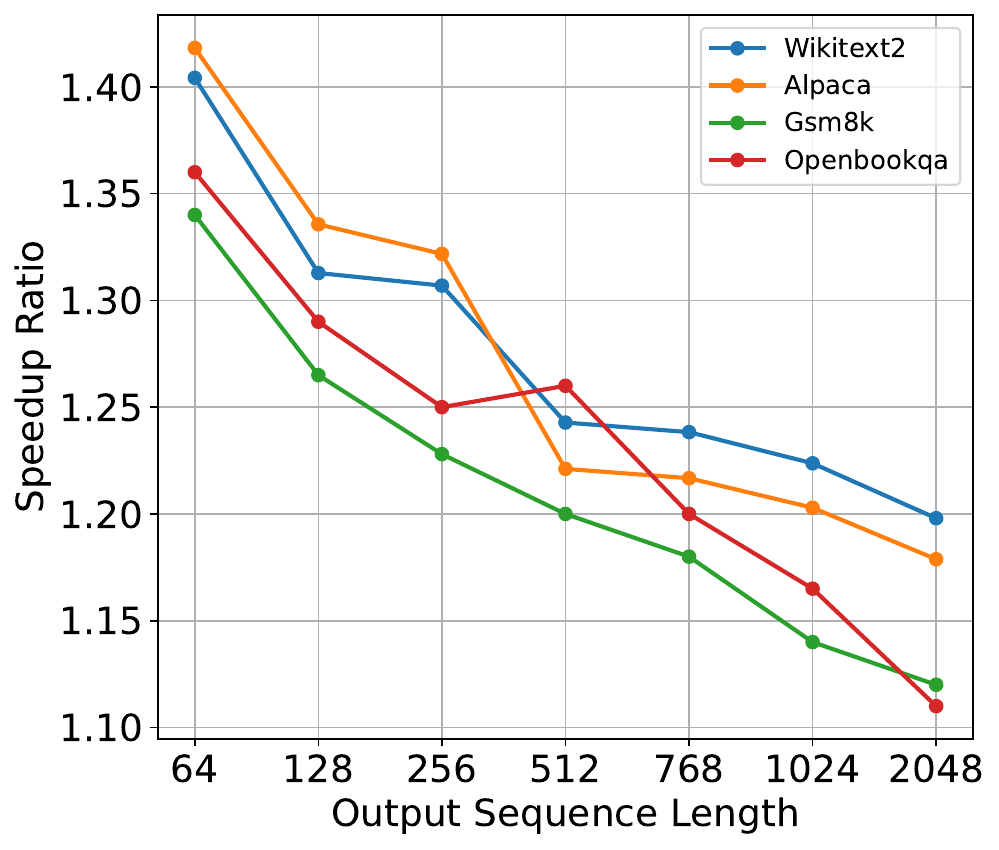}
   \caption{Unstable Inference Speedup}
   \label{fig:motivation_d}
\end{subfigure}
\vspace{-3mm}
\caption{\textbf{(a)} Offline calibrated thresholds for Attention and MLP blocks of LLaMA3-8B. \textbf{(b)} The calibrated and actual needed thresholds in 24th Attention block of LLaMA3-8B. Input tokens are randomly selected from WikiText2. The dashed lines denote the calibrated thresholds, while scatter points indicate real thresholds needed to achieve the target sparsity. Many blue points (50\% actual thresholds) are even below the black line (40\% calibrated threshold). \textbf{(c)}  Average actual activation sparsity of LLaMA3-8B's Attention blocks. Input sequences are from different datasets and sorted by the sparsity.  \textbf{(d)} The inference speedup of magnitude pruning at 50\% sparsity under various output sequence lengths. Speedup equal to 1.0 is the dense version of LLaMA3-8B model. Best view in color and zoom in.} 
\label{fig:motivation}
\end{figure*}

\subsection{Computational Invariance Transformation} 
\label{sec:preliminary_invariance}
The computational invariance theorem, as described in~\citep{ashkboos2024slicegpt} Theorem 1, asserts that the activations between Transformer blocks (i.e., Attention and MLP) can be transformed using an orthogonal matrix $\mQ$ without altering the model's output. 
This theorem remains valid even when RMSNorm is applied between the blocks. The reason is that RMSNorm divides activations by their norms, and a orthogonal transformation $\mQ$ does not affect these norms, since $\mQ\mQ^\top = \mQ^\top\mQ = \mI$ and $\Vert \vx \mQ \Vert = \sqrt{\vx\mQ \mQ^\top\vx^\top} = \sqrt{\vx\vx^\top} = \Vert\vx\Vert$. Therefore, we have the the commutation property in RMSNorm operation: 
\begin{equation}
\text{RMSNorm}(\mX) = \text{RMSNorm} (\mX \mQ ) \Qt .
\end{equation} 
In forward pass, we can merge the additional transformation $\mQ$ into weight matrices to avoid the extra computation induced by $\mQ$. As depicted in \Cref{fig:main_framework}, if $\Wo$ is right-multiplied by an orthogonal matrix $\Q$, this will make the Attention block produces a rotated activation $\mX\Q$ as output. We can cancel this effect by left-multiplying the weight matrices of subsequent MLP blocks (i.e., $\Wup$, $\Wgate$) by $\Qt$, so the output changes induced by $\Q$ are neutralized, and the original computation results in each block remain unchanged. This process is same for $\Wdown$ and $\Wq$, $\Wk$, $\Wv$. 

\section{Method}
\label{sec:motivation}
\subsection{Motivation: Rethinking Low-Magnitude Pruning}
\label{sec:motivation_rethinking}
We conduct experiments on LLaMA3-8B using magnitude pruning methods, using 16 sequences of 512 tokens from WikiText2~\citep{wikitext} for various sparsity settings. We identify several key observations and visualizing them in \cref{fig:motivation}. 

\textbf{Ambiguity and Inaccuracy in Defining Low Magnitude}.
The definition of low magnitude influences how activations are sparsified.  As shown in \cref{fig:motivation_a}, cutoff thresholds obtained from offline calibration fluctuate significantly as the layer goes deeper. The thresholds for Attention and MLP blocks within the same layer also exhibit great variation across different sparsity levels. Using these inaccurate thresholds can lead to severe mismatches with target sparsity. \cref{fig:motivation_b} compares cutoff thresholds for activation pruning using TEAL and TopK methods at various sparsity levels. TEAL uses a fixed threshold from offline calibration, shown as a dashed line, while TopK computes thresholds during testing, shown as scatter points. It's evident that offline calibrated thresholds hardly align with the actual needed thresholds. In most cases, tokens actually require higher or lower thresholds than the calibrated ones. At sequence beginnings, actual thresholds are extremely lower than calibrated ones, explaining why methods like TEAL struggle to sparsify initial tokens of a sequence during prefill stage of LLM inference.

\textbf{Difficulty in Maintaining Consistent Sparsity}.
Achieving stable and predictable LLM inference acceleration requires maintaining uniform and consistent target sparsity across different input sequences. As illustrated in Figure \ref{fig:motivation_c}, conventional pruning approaches that use calibrated cutoff thresholds fail to deliver precise sparsity control, resulting in actual sparsity significantly deviating from target values and showing considerable variation across sequences from different datasets. These inconsistencies directly compromise the stable acceleration results shown in Figure \ref{fig:motivation_d}, leading to inefficient LLM inference that falls short of fully realizing the theoretical performance advantages of sparse activation. This observation highlights the critical importance of consistent sparsity for two key reasons: \textbf{1)} Inconsistent sparsity prevents models from reliably achieving their target sparsity, making computational efficiency gains unpredictable and rendering optimization efforts futile. \textbf{2)} It introduces substantial variance in per-token inference times, causing undesirable fluctuations in decoding speed.  

\textbf{Magnitude and Channel Importance Misconception}.
Magnitude pruning methods \cite{liu2024teal,lee2024cats} typically assume that low-magnitude activations have a minor impact on model outputs, leading to the practice of zeroing out entries that fall below a certain threshold. However, as proved in Wanda~\cite{wanda} and SliceGPT~\citep{ashkboos2024slicegpt}, pruning solely based on the magnitude of weights or activations is not optimal. Low-magnitude activations can still substantially affect the output if their corresponding channels in the weight matrix have large norms. We further demonstrate in \cref{sec:theoretical_analysis} that empirical output errors from magnitude pruning methods are consistently higher than theoretical output errors.

\subsection{Layerwise Orthogonal Rotation}
\label{sec:motivation_layerwise_rotation}
The limitations of magnitude pruning drive us to explore a more effective approach for identifying activations that are inherently easier to prune. Inspired by rotation-based quantization methods~\citep{ashkboos2024quarot}, we attempt to obtain a rotated activation through orthogonal transformations, after which the channel importance can be easily distinguished. 
We propose using an orthogonal matrix $\mQ_l$ to rotate and prune the input activation (i.e., $\vh_1$ and $\vh_3$) of $l$-th layer, and reverse the rotation using $\Qt_l$ after pruning. To construct $\mQ_l$, we use PCA to leverage it's inherent ability to identify principal components.
Specifically, we select a calibration dataset consist of $M$ sequences, feed it into the model and forward pass through each layer. Let $\mX_l^i$ denotes the input activation of layer $l$ for $i$-th sequence. We compute the covariance matrix $ \mX_l^i(\mX_l^i)^T $ of each $\mX_l^i$ and average over all sequences:
\begin{equation}
Cov( \mX_l, \mX_l^T) = \frac{1}{M} \sum_{i=0}^{M}  \mX_l^i(\mX_l^i)^T .
\end{equation}
Then, the eigenvectors of $Cov( \mX_l, \mX_l^T)$ are solved and sorted in eigenvalues descending order to construct the $\mQ_l$. Here, we only rotate  $\vh_1$ and $\vh_3$ in \name for efficiency, since rotation matrices for $\vh_2$ and $\vh_4$ cannot be incorporated into $\Wv$, $\Wup$, and $\Wgate$ due to the constraints imposed by the Grouped Query Attention (GQA) in Attention block and the element-wise product operation in MLP block.

It can be observed that each layer must use the same $\mQ_l$ due to residual connections between layers. We initially attempt to use a single global $\mQ$ for all layers, but quickly find out that the optimal orthogonal matrix for each layer tends to be very different. 
To address this, 
a residual adapter $\mQ_l^T\mQ_{l+1}$ is applied to the output of each layer to realize layer-specific rotation, as shown in \cref{fig:main_framework}. Although this introduce small additional operations, we demonstrate in \cref{subsec:ablation} that such operations preserve the optimal rotation for each layer and improve the overall efficiency. In Appendix A, we provide a theoretical analysis of rotation-based activation pruning, demonstrating that the constructed rotation matrix effectively reduces empirical error within each layer, thereby outperforming magnitude pruning methods in accuracy. 

\subsection{Introducing Consistent Activation Sparsity}
\label{sec:inducing_sparsity}
To introduce a consistent activation sparsity into non-ReLU LLMs, we replace magnitude pruning with Top-K function to zero out the rotated activations. Specifically, given the rotated activation $\vx_l\mQ_l$ from the $l$-th block and a target sparsity level $p$, we first calculate the number of elements $k$ that need to be reserved after sparsification by $ k = \alpha \cdot (1-p) \cdot D_{\text{in}}$, where $\alpha$ is a hyperparameter tuned to control the sparsity coefficient between $\vh_1$ and $\vh_2$ (or $\vh_3$ and $\vh_4$) within the same block and guarantee the overall model-level sparsity. We use Top-K function to determine whether each element of rotated activation $\Tilde{\vx} = \vx\mQ$ need to be reserved or not. The sparsification function in \name is defined as:
\begin{equation}
S_{k}(\Tilde{\vx_i}) = 
\begin{cases}
\Tilde{\vx_i}, & \text{if } | \Tilde{\vx_i}|  \in \text{Top}_k( |\Tilde{\vx_i}| )  \\
0 , & \text{otherwise}
\end{cases} 
\end{equation}
where $\text{Top}_{k}$ is the function that selects the largest $k$ elements of $\Tilde{\vx}$ in terms of the absolute values. Activation sparsity can be introduced by modifying the forward pass to:
\begin{equation}
\label{eq:before_merge}
        \mY_l = S_{k} (\mX_l \mQ_l) \cdot  \mQ_l^T \mW_l^T ,
\end{equation}
where $\mQ_l^T \mW_l^T$ can be precomputed and merged into the weights to avoid online computation. Therefore, \cref{eq:before_merge} can be rewritten as :
\begin{equation}
\label{eq:after_merge}
        \mY_l = S_{k} (\mX_l \mQ_l) \cdot  ( \mW_l \mQ_l )^T ,
\end{equation}
and $\mQ_l$ become identity matrix $\mI$ when activation $ \vx $ is in the $\vh_2$ and $\vh_4$ position for efficient implementation.

\subsection{Hardware-efficient Customized Kernels}
Introducing activation sparsity alone is insufficient to achieve end-to-end LLM inference acceleration on hardware~\citep{frantar-sparsegpt}. To translate the activation sparsity induced by \name into wall-clock inference speed-up, we need to implement a fast and efficient customized GPU kernel. Building on the Triton-based kernel introduced by DejaVu~\citep{dejavu} and TEAL~\citep{liu2024teal}, we implement a GEMV kernel that provides practical and consistent token generation speed-up. \textbf{1)} We store weight matrices $\mW$ in column-major format in memory to facilitate the selective loading of weight columns corresponding to non-zero activations, as memory coalescing could significantly improve memory bandwidth utilization~\citep{dao2022flashattention, Ivanov2020DataMI}. \textbf{2)} We fuse the Top-K function into the matrix-vector multiplication, identifying the indices of the columns in the weights that need to be retained during the forward pass. \textbf{3)} We selectively load the sparse activations and corresponding weight columns from memory, transfer them to the computation units, perform the multiplication, and store the result. 
\section{Experiments}

\begin{table*}[t]
\setlength{\tabcolsep}{4pt}
\centering
\caption{\small Comparison of the average accuracy on seven zero-shot 
 tasks and 5-shot MMLU. The first column indicates the actual model-level sparsity. CATS cannot achieve 50\% model-level sparsity as it only sparsifies the MLP's Down projections. Full results are in ~\cref{sec:app_full_experiments}. }
\label{tab:main}
\setlength{\tabcolsep}{0.8mm}
\scalebox{0.95}{
\begin{tabular}{c|cccc|cccc|cccc|cc}
\noalign{\vspace{0.1em}}
\toprule
  & \multicolumn{2}{c}{\textbf{LLaMA2-7B}} & \multicolumn{2}{c}{\textbf{LLaMA2-70B}} & \multicolumn{2}{c}{\textbf{LLaMA3-8B}} & \multicolumn{2}{c}{\textbf{LLaMA3-70B}} & \multicolumn{2}{c}{\textbf{Qwen2.5-7B}} & \multicolumn{2}{c}{\textbf{Qwen2.5-72B}} & \multicolumn{2}{c}{\textbf{Mistral-7B}} \\
\hline
  \textbf{Method} & Acc$_7$ & MMLU & Acc$_7$ & MMLU & Acc$_7$ & MMLU & Acc$_7$ & MMLU & Acc$_7$ & MMLU & Acc$_7$ & MMLU & Acc$_7$ & MMLU \\
  & 0-shot & 5-shot & 0-shot & 5-shot & 0-shot & 5-shot & 0-shot & 5-shot & 0-shot & 5-shot & 0-shot & 5-shot & 0-shot & 5-shot \\
\noalign{\vspace{0.1em}}\hline\noalign{\vspace{0.1em}}
   $\text{Dense}$ & 66.69 & 45.85 & 73.66 & 68.80 & 70.05 & 65.26 & 76.29 & 78.71 & 70.34 & 74.21 & 75.58 & 86.08 & 70.44  & 62.34 \\
\hline
\noalign{\vspace{0.1em}}
 $\text{CATS}_{25\%}$ & 63.71 & 42.76 & 72.74 & 67.50 & 67.66 & 61.85 & 75.58 & 77.93 & 69.66 & 72.67 & 72.77 & 84.91 & 64.55 & 59.83 \\
  $\text{TEAL}_{25\%}$ & 65.94 & 44.66 & 73.31 & 67.70 & 69.40 & 63.85 & 73.23 & 74.86 & 69.76 & 73.21 & 75.05 & 85.44 & 70.06 & 61.51 \\
  $\text{LaRoSA}_{25\%}$  & \textbf{66.39} & \textbf{45.66} & \textbf{73.38} & \textbf{68.74} & \textbf{69.54} & \textbf{64.85} & \textbf{76.30} & \textbf{78.13} & \textbf{70.12} & \textbf{73.74} & \textbf{75.53} & \textbf{85.62} & \textbf{70.25} & \textbf{61.81} \\
\hline
    $\text{CATS}_{40\%}$ & 49.55 & 24.67 & 62.74 & 55.83 & 55.11 & 31.82 & 72.07 & 72.12  & 61.83 & 63.99 & 63.12 & 80.95 & 59.62 & 44.31 \\
   $\text{TEAL}_{40\%}$ & 64.92 & 43.46 & 72.47 & 66.78 & 68.14 & 59.84 & 72.24 & 73.23 & 68.61 & 71.44 & 74.65 & 84.80 & 68.76 & 60.17 \\
   $\text{LaRoSA}_{40\%}$ & \textbf{66.15} & \textbf{44.66} & \textbf{73.31} & \textbf{68.16} & \textbf{68.79} & \textbf{62.61} & \textbf{75.41} & \textbf{77.62} & \textbf{69.67} & \textbf{72.33} & \textbf{75.35} & \textbf{85.53} & \textbf{69.44} & \textbf{61.15} \\
\hline
  $\text{TEAL}_{50\%}$ & 63.22 & 39.57 & 71.92 & 64.43 & 64.92 & 52.78 & 70.80 & 69.20 & 67.76 & 68.53 & 73.74 & 83.54 & 66.73 & 57.34 \\
    $\text{LaRoSA}_{50\%}$ & \textbf{64.61} & \textbf{43.10} & \textbf{72.86} & \textbf{67.57} & \textbf{67.19} & \textbf{58.65} & \textbf{73.81} & \textbf{76.51} & \textbf{69.09} & \textbf{70.09} & \textbf{75.18} & \textbf{84.34} & \textbf{68.46} & \textbf{58.80} \\
\bottomrule
\noalign{\vspace{0.1em}}\noalign{\vspace{0.1em}}
\end{tabular}
}
\vspace{-1.0em}
\end{table*}

\begin{table}[t]
\setlength{\tabcolsep}{2.8pt}
\small
\centering
\caption{Perplexity results under different sparsity settings. The numbers in the first column reflects the actual model-level sparsity}
\label{table:ppl-results}
\begin{tabular}{l|cc|cc|c|cc}
\toprule
& \multicolumn{2}{c}{LLaMA3} & \multicolumn{2}{c}{LLaMA2} & \multicolumn{1}{c}{Mistral} &  \multicolumn{2}{c}{Qwen2.5}\\
\cmidrule(lr){2-3} \cmidrule(lr){4-5} \cmidrule(lr){6-6} \cmidrule(lr){7-8}
Method & \textbf{8B} & \textbf{70B} & \textbf{7B} & \textbf{70B} & \textbf{7B}  & \textbf{7B}& \textbf{72B}\\
\midrule
$\text{Dense}$  & 6.13 & 2.85 & 5.47 & 3.32 & 5.31  & 6.85 & 3.87  \\
\midrule
$\text{CATS}_{25\%}$ & 7.22 & 3.56 & 5.99 & 3.81 & 6.38  & 7.58 & 4.74  \\
 $\text{TEAL}_{25\%}$ & 6.37 & 3.95 & 6.31 & 3.43 &5.53  & 6.93 &3.93  \\
$\name_{25\%}$ & \textbf{6.23} & \textbf{2.94} & \textbf{5.51} & \textbf{3.34} & \textbf{5.34}  & \textbf{6.90} &\textbf{3.92}  \\
\midrule
$\text{CATS}_{40\%}$& 18.37 & 5.97 & 45.46 & 10.75 & 23.44  & 11.06 & 7.15  \\
$\text{TEAL}_{40\%}$ & 6.83 & 4.45 & 6.40 & 3.61 & 5.98  & 7.20 & 4.07  \\
$\name_{40\%}$ & \textbf{6.60} & \textbf{3.37} & \textbf{5.64} & \textbf{3.44} & \textbf{5.44}  & \textbf{7.10} &\textbf{4.06}  \\
\midrule
 $\text{TEAL}_{50\%}$ & 7.56 & 5.61 & 6.80 & 3.91 &7.17  & 7.81 &4.31  \\
$\name_{50\%}$ & \textbf{7.22} & \textbf{4.10} & \textbf{5.87} & \textbf{3.62} & \textbf{5.62}  & \textbf{7.42} &\textbf{4.26}  \\
\midrule
$\text{TEAL}_{60\%}$ & 9.19 & 9.97 & 7.82 & 4.53 & 8.05  & 9.99 & 7.45  \\
 $\name_{60\%}$ & \textbf{8.57} & \textbf{5.51} & \textbf{6.40} & \textbf{3.98} &\textbf{6.04}  & \textbf{8.42} &\textbf{4.90}  \\
\bottomrule
\end{tabular}
\vspace{-1.0em}
\end{table}

\subsection{Experimental Setups}
\textbf{Models and Baselines}.
To test the robustness and scalability of \name, we conducted experiments using 4 leading LLMs: LLaMA3~\citep{llama3}, LLaMA2~\citep{llama2}, Mistral~\citep{jiang2023mistral}, and Qwen2.5~\citep{qwen2}. The model size is selected from 7B and 70B parameters. We compare with state-of-the-art baselines, including CATS~\citep{lee2024cats} and TEAL~\citep{liu2024teal}. For SOTA baselines, we choose not to quote the results from their papers in order to make fair comparisons under the same model-level sparsity setting and to impose full sparsity in the input sequence length dimension.

\textbf{Implementation Details}.
We use the WikiText2 train set~\citep{wikitext} as calibration dataset for \name and other reproducible works. We randomly select 16 sequences with sequence length of 2048 tokens to compute the rotation matrices $\mQ$ for \name and empirical distributions for CATS and TEAL. The computation of $\mQ$ is performed on 8x80G A100 GPUs, taking approximately 12 minutes to complete for the LLaMA3 70B model. For sparsity coefficient $\alpha$, we employ Grid Search to find the optimal hyperparameter for each model, as shown in~\cref{sec:appendix_grid_search}~\cref{algo:grid_search}. The optimal $\alpha$ for each activation type of models is presented in~\cref{sec:appendix_grid_search}~\cref{table:app_sparsity_coefficients}.
For a fair comparison, we implement full activation sparsification at prefill stage for all methods including TEAL.

\textbf{Evaluation}.
We report the perplexity (PPL) score on WikiText2 test set for generation task. All models are evaluated on the same 128 random samples with a 2048-token context length. 
For comprehensive assessment, we utilize lm-evaluation-harness\citep{eval-harness} to assess \name performance on a variety of downstream tasks, including 0-shot tasks for ARC-Easy~\citep{clark2018think}, ARC Challenge~\citep{clark2018think}, PIQA \citep{bisk2020piqa}, BoolQ~\citep{clark2019boolq}, HellaSwag \citep{zellers2019hellaswag}, OpenBookQA~\citep{mihaylov2018can} and WinoGrande \citep{sakaguchi2021winogrande}. We also report 5-shot accuracy for MMLU \citep{hendrycks2020measuring}. 

\begin{table*}[t]
\setlength{\tabcolsep}{7pt}
\small
\centering
\caption{Single-batch token generation speed (token/sec) results of LaRoSA. We exclude LLaMA2-70B since it is architecturally similar to to LLaMA3-70B. Tensor parallelism (TP2) is set for LLaMA3-70B and Qwen2.5-72B.}
\vspace{0.2em}
\label{table:inference-results}
\begin{tabular}{l|l|c|cc|cc|c}
\toprule
& & \textbf{LLaMA2} &\multicolumn{2}{c}{\textbf{LLaMA3}} & \multicolumn{2}{c}{\textbf{Qwen2.5}} & \textbf{Mistral} \\
\midrule
 GPU & Sparsity & \textbf{7B} & \textbf{8B} & \textbf{70B} & \textbf{7B} & \textbf{72B} & \textbf{7B} \\
\midrule
 & Dense & 109.95 (1.00$\times$) & 101.25 (1.00$\times$)  & 22.05 (1.00$\times$) & 107.54 (1.00$\times$) & 19.32 (1.00$\times$) & 105.88 (1.00$\times$)\\
\cmidrule{2-8}
& 0\% & 96.75 (0.88$\times$) & 92.13 (0.89$\times$) & 20.32 (0.90$\times$) & 100.97 (0.90$\times$) & 56.33 (0.92$\times$) & 56.33 (0.88$\times$) \\
A100 & 25\% & 125.34 (1.14$\times$) & 111.37 (1.10$\times$) & 24.70 (1.12$\times$) & 123.67 (1.15$\times$) & 22.60 (1.17$\times$) & 118.59 (1.12$\times$) \\
& 50\% & 151.73 (1.38$\times$) & 131.62 (1.30$\times$) & 29.77 (1.35$\times$) & 152.71 (1.42$\times$) & 29.17 (1.51$\times$) & 148.23 (1.40$\times$) \\
& 75\% & 188.01 (1.72$\times$) & 174.15 (1.72$\times$) & 38.59 (1.75$\times$) & 180.67 (1.68$\times$) & 33.42 (1.73$\times$) & 186.35 (1.76$\times$) \\
\midrule
& Dense & 72.47 (1.00$\times$) & 67.19 (1.00$\times$)  & 15.75 (1.00$\times$) & 71.70 (1.00$\times$) & 14.80 (1.00$\times$)  & 72.89 (1.00$\times$)\\
\cmidrule{2-8}
& 0\% & 65.95 (0.91$\times$) & 60.47 (0.90$\times$) & 14.33 (0.91$\times$) & 63.81 (0.89$\times$) & 13.62 (0.92$\times$) & 72.89 (0.90$\times$) \\
H20 & 25\% & 85.51 (1.18$\times$) & 79.28 (1.18$\times$) & 19.21 (1.22$\times$) & 83.17 (1.16$\times$) & 18.35 (1.24$\times$) & 65.60 (1.16$\times$) \\
& 50\% & 104.36 (1.44$\times$) & 94.74 (1.41$\times$) & 22.84 (1.45$\times$) & 98.95 (1.38$\times$) & 21.46 (1.45$\times$) & 84.55 (1.40$\times$) \\
& 75\% & 136.97  (1.89$\times$) & 126.99 (1.89$\times$) & 29.92 (1.90$\times$) & 132.64 (1.85$\times$) & 28.42 (1.92$\times$) & 102.05 (1.90$\times$) \\
\bottomrule
\end{tabular}
\vspace{-1.0em}
\end{table*}

\begin{table}[t]
    \setlength{\tabcolsep}{5pt}
    \small
    \centering
    \caption{Accuracy of math and knowledge tasks on reasoning models under different sparsity settings.}
    \begin{tabular}{l|c|c|c}
        \toprule
         & \textbf{MATH-} & \textbf{GPQA-} & \textbf{AIME-}  \\
         Method & \textbf{500} & \textbf{Diamond} & \textbf{2024} \\
        \midrule
        \textbf{DS-R1-Distill-Llama3-8B}  \\
        \midrule
        DeepSeek-R1 Report & 89.1 & 49.0 & 50.4 \\
        Reproduce Result & 87.6 & 45.9 & 40.0 \\
        \midrule
        +LaRoSA 25\% & 85.0 & 44.9 & 40.0 \\
        +LaRoSA 40\% & 80.6 & 43.1 & 36.7 \\
        +LaRoSA 50\% & 78.8 & 41.2 & 33.3 \\
        \midrule
        \textbf{DS-R1-Distill-Qwen2.5-7B} \\
        \midrule
        DeepSeek-R1 Report & 92.8 & 49.1 & 55.5 \\
        Reproduce Result & 93.4 & 43.9 & 50.0 \\
        \midrule
        +LaRoSA 25\% & 91.0 & 43.5 & 46.7 \\
        +LaRoSA 40\% & 88.5 & 41.0 & 43.3 \\
        +LaRoSA 50\% & 85.1 & 40.2 & 43.3 \\
        \bottomrule
    \end{tabular}
    \label{tab:DeepSeek}
    \vspace{-1.0em}
\end{table}

\subsection{Main Results}
\textbf{Language Generation Task}.
First, we evaluate the accuracy of \name with SOTA baselines on language generation task. As indicated in \cref{table:ppl-results}, \name notably outperforms other baselines on perplexity across various models and different sparsity settings. Under a low sparsity setting like 25\%, \name achieves a minimal PPL loss of merely 0.1 on LLaMA3-70B. This significantly surpasses previous methods, such as CATS with a PPL loss of 0.71 and TEAL with a PPL loss of 1.1. \name maintains its advantage even at high sparsity levels, such as 60\%. For instance, on Qwen2.5-72B, \name incurs only a 1.03 PPL loss, considerably outperforming TEAL, which suffers a loss of 3.87. Moreover, on LLaMA2-7B, \name at 50\% sparsity even outperforms the previous SOTA methods under a 25\% sparsity setting, highlighting its superior efficiency. In TEAL's paper, they only sparsify 99\% of tokens in the prefill stage because the empirical thresholds they use struggle to sparsify initial tokens. In contrast, LaRoSA does not suffer this performance degradation and can achieve full sparsity in the input sequence length dimension.

\textbf{Zero-shot and Few-shot Tasks}.
\name exhibits superior performance when applied to downstream language tasks. We present the end-to-end performance of \name and SOTA methods across different models, tasks, and sparsity ratios. The results are shown in Table 1. We highlight several key observations: \textbf{1)} \name significantly outperforms CATS and TEAL across zero-shot tasks and 5-shot MMLU. With the same 40\% model-level sparsity, \name achieves an average performance gain of 16.60\% over CATS and 1.23\% over TEAL on LLaMA2-7B. \textbf{2)} \name achieves performance comparable to the dense model with negligible degradation at a low sparsity level around 25\%. For example, \name shows only a 0.22\% accuracy drop on Qwen2.5-7B compared to the dense model. \textbf{3)} For some models, such as LLaMA2-7B, \name at 40\% sparsity performs even better than CATS and TEAL do at 25\% sparsity. These results demonstrate the superiority of our \name, which establishes new state-of-the-art performance by effectively sparsifying input activations.


\textbf{Complex Problem Solving Tasks}. 
To further evaluate the effectiveness and robustness of LaRoSA, we conduct experiments on complex tasks such as MATH500~\citep{math500}, GPQA-Diamond~\citep{rein2024gpqa}, and AIME'24~\citep{aime} using two reasoning models: DeepSeek-R1-Distill-Llama3-8B and DeepSeek-R1-Distill-Qwen2.5-7B~\citep{deepseekr1}. We utilize the Alpaca dataset, which is not inherently related to complex reasoning tasks, to derive the rotation matrix $\mQ_l$ for each layer. The evaluation of LaRoSA on these complex tasks is carried out with the Hugging Face Open-R1 repository~\citep{openr1}. As shown in \cref{tab:DeepSeek}, our experimental results indicate that at a sparsity level of 25\%, both reasoning models exhibit only a slight performance degradation on complex tasks. This finding demonstrates that the LaRoSA method maintains its effectiveness even when utilizing data that is not directly related to complex reasoning tasks for generating the rotation matrix.
 
\textbf{Inference Speed Up}.
We demonstrate consistent end-to-end efficiency improvements with LaRoSA. For this, we collected ten samples, each consisting of 128 tokens, from various test datasets and generated new sequences with lengths ranging from 128 to 2048 tokens. This was done to evaluate performance across different generation lengths. We applied 50\% sparsity to LLaMA3-8B and Qwen2.5-7B using optimal sparse patterns obtained from LaRoSA and TEAL. As depicted in \cref{fig:larosa_speed_up}, our LaRoSA realizes more consistent wall-clock time speed-ups compared to TEAL. This improvement is attributed to the TopK strategy which precisely zeros out activation entries needed for a specific sparsity level. As the output sequence length increases, LaRoSA outperforms TEAL in overall speed-up efficiency, demonstrating its effectiveness and robustness. Moreover, we evaluated the end-to-end token generation speed-up on different types of models at a fixed output sequence length of 128, with results detailed in \cref{table:inference-results}. For LLaMA3-70B, LaRoSA achieves notable speed-ups of up to 1.45× and 1.90× at 50\% and 75\% sparsity, respectively. The speed-up for 0\% sparsity is lower than dense model, due to the additional computation introduced by residual adapters and sparsification functions. Nonetheless, we observe overall inference efficiency improvements as sparsity increases. The detailed extra computation in TFLOPS are listed in \cref{table:extra_tflops}.

\textbf{Impact of Initial Tokens}. As noted in Section 5.4.3 of the TEAL paper, TEAL applies activation sparsification to only 99\% of tokens during the prefill stage. This approach is attributed to attention sinks~\citep{xiao2024efficient}, which cause more severe degradation when sparsifying the initial tokens. Consequently, TEAL skips the first 1\% of tokens to maintain model performance. However, we argue that this issue arises due to a mismatch between the actual and calibrated thresholds for each token. As illustrated in \cref{fig:motivation_b}, this mismatch is particularly pronounced at the beginning of sequences, leading to TEAL's suboptimal performance on initial tokens. In contrast, LaRoSA and CATS apply sparsification uniformly across all tokens without special treatment. For a fair comparison, we implement full activation sparsification for TEAL, which results in slightly worse performance than originally reported by TEAL. This outcome also underscores the lack of robustness in using empirical thresholds.

\begin{table}[t]
\vspace{-0.5em}
\setlength{\tabcolsep}{1.0pt}
\small
\centering
\caption{Impact of different sparsification functions on model's actual sparsity and perplexity results. Experiments are performed on LLaMA2-7B. Sparsity is set to 50\%.}
\label{table:ablation1_sparse_function}
\vspace{0.2em}
\scalebox{0.95}{
\begin{tabular}{l|cc|cc|c|c}
\toprule
\textbf{Method} & \multicolumn{2}{c}{\textbf{Attention}} & \multicolumn{2}{c}{\textbf{MLP}} & \textbf{Model } & \textbf{PPL} \\ 
  & $\vh_1$(\%) & $\vh_2$(\%) & $\vh_3$(\%) & $\vh_4$(\%) & \textbf{Sparsity} & \\
 
\midrule
 $\text{TEAL}$ & \small{$47.9_{(\pm1.2)}$} & $52.0_{(\pm4.7)}$ & $48.4_{(\pm1.4)}$ & $4.9_{(\pm1.5)}$ & $48.8$ & $6.80$ \\
 $\text{TopK}$ & $50.0_{(\pm0.0)}$ & $50.0_{(\pm0.0)}$ & $50.0_{(\pm0.0)}$ & $50.0_{(\pm0.0)}$ & $50.0$ & $6.25$ \\
 $\text{TopK+GS}$ & $45.0_{(\pm0.0)}$ & $65.0_{(\pm0.0)}$ & $40.0_{(\pm0.0)}$ & $57.5_{(\pm0.0)}$ & $50.0$ & $6.02$ \\
 $\name$ & $45.0_{(\pm0.0)}$ & $65.0_{(\pm0.0)}$ & $40.0_{(\pm0.0)}$ & $57.5_{(\pm0.0)}$ & $50.0$ & $\textbf{5.87}$ \\
\bottomrule
\end{tabular}
}
\vspace{-1.0em}
\end{table}

\begin{table}[t]
\setlength{\tabcolsep}{3pt}
\small
\centering
\caption{Influence of different rotation types in various sparsity settings. \textbf{Baseline method represent TopK+GS}.}
\vspace{0.2em}
\label{table:layerwise_blockwise}
\scalebox{1.0}{
\begin{tabular}{l|c|c|c|c}
\toprule
\textbf{Method} & \multicolumn{2}{c}{\textbf{LLaMA3-8B}}  & \multicolumn{2}{c}{\textbf{LLaMA2-7B}} \\ 
 & PPL & TFLOPs & PPL & TFLOPs\\
\midrule
$\text{Dense}$& 6.13 & 32.94 (1.00$\times$) & 5.47 &  29.26 (1.00$\times$)\\
\midrule
 $\text{Baseline}_{25\%}$ & 6.29 & 32.94 (1.00$\times$) & 5.53 & 29.26 (1.00$\times$)   \\
 $\text{Baseline}_{25\%}$ + $\Q_M$ & 6.35 &32.94 (1.00$\times$) & 5.64 & 29.26 (1.00$\times$)  \\
 $\text{Baseline}_{25\%}$ + $\Q_B$ & 6.23 & 37.22 (1.13$\times$) & 5.51 & 33.06 (1.15$\times$)  \\
 $\text{Baseline}_{25\%}$ + $\Q_L$  & \textbf{6.23} & 34.91 (1.06$\times$) & \textbf{5.51} & 31.46  (1.07$\times$) \\
\midrule
 $\text{Baseline}_{50\%}$ & 7.71 & 32.94 (1.00$\times$) & 6.02 & 29.26 (1.00$\times$)  \\
 $\text{Baseline}_{50\%}$ + $\Q_M$ & 7.76 & 32.94 (1.00$\times$) & 6.03 &  29.26 (1.00$\times$) \\
 $\text{Baseline}_{50\%}$ + $\Q_B$ & \textbf{7.21} & 37.22 (1.13$\times$) & \textbf{5.86} & 33.06 (1.15$\times$)   \\
 $\text{Baseline}_{50\%}$ + $\Q_L$  & 7.22 & 34.91 (1.06$\times$) & 5.87 & 31.46  (1.07$\times$)   \\
\midrule
 $\text{Baseline}_{60\%}$ & 9.39 & 32.94 (1.00$\times$) & 6.91 & 29.26 (1.00$\times$)   \\
 $\text{Baseline}_{60\%}$ + $\Q_M$ & 9.43 & 32.94 (1.00$\times$) & 6.92 & 29.26 (1.00$\times$) \\
 $\text{Baseline}_{60\%}$ + $\Q_B$ & \textbf{8.56} & 37.22 (1.13$\times$) & 6.40 & 33.06(1.15$\times$)  \\
 $\text{Baseline}_{60\%}$ + $\Q_L$ & 8.57 & 34.91( 1.06$\times$) & \textbf{6.40} & 31.46  (1.07$\times$)   \\
\bottomrule
\end{tabular}
}
\vspace{-1.0em}
\end{table}

\begin{figure}[t]
\centering
\begin{subfigure}{0.5\textwidth}
  \centering
  \includegraphics[width=\textwidth]{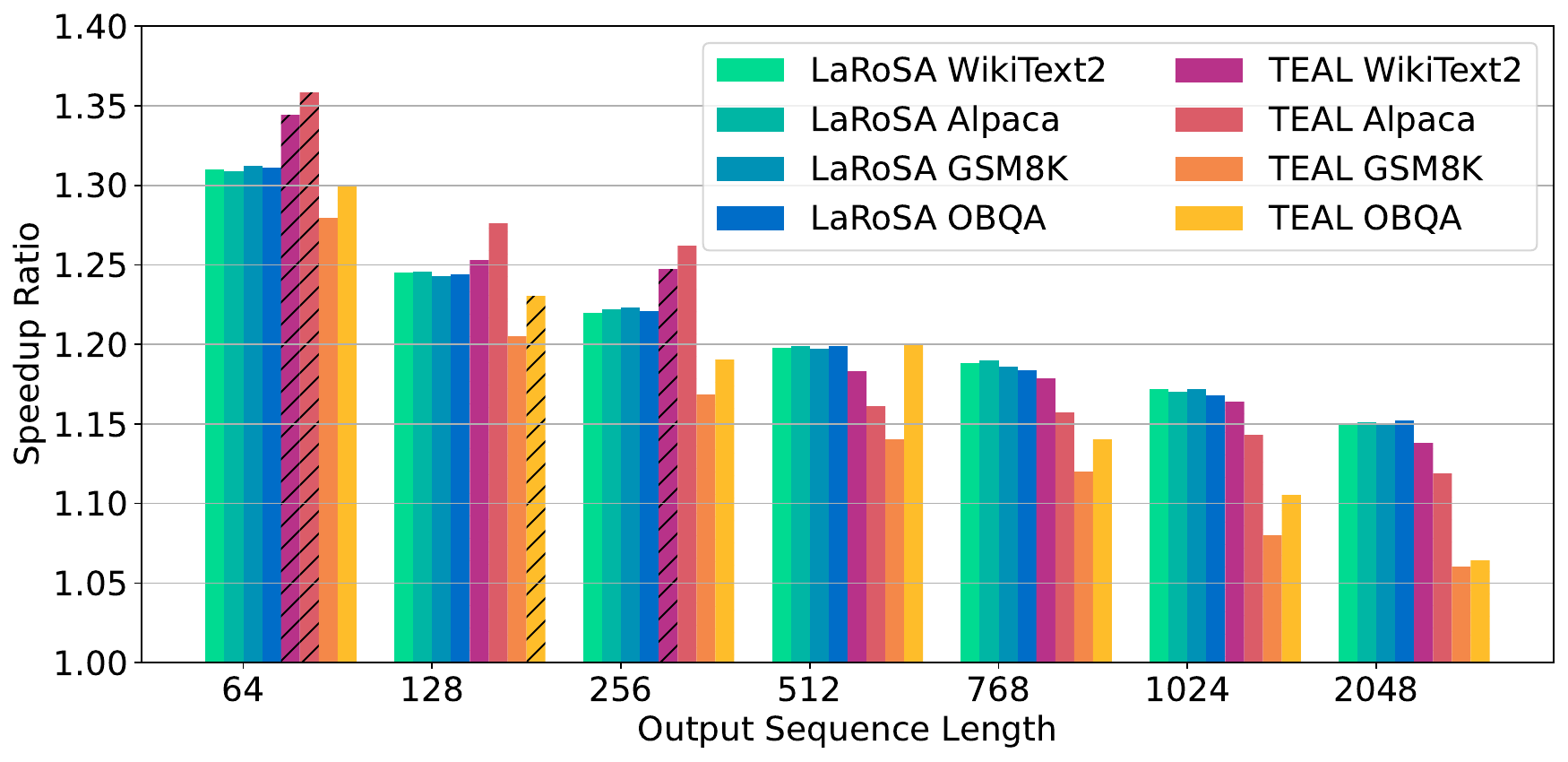}
   \caption{Inference Speed-up for LLaMA3-8B.}
   \label{fig:larosa_speed_up_a}
\end{subfigure}
\begin{subfigure}{0.5\textwidth}
  \centering
  \includegraphics[width=\textwidth]{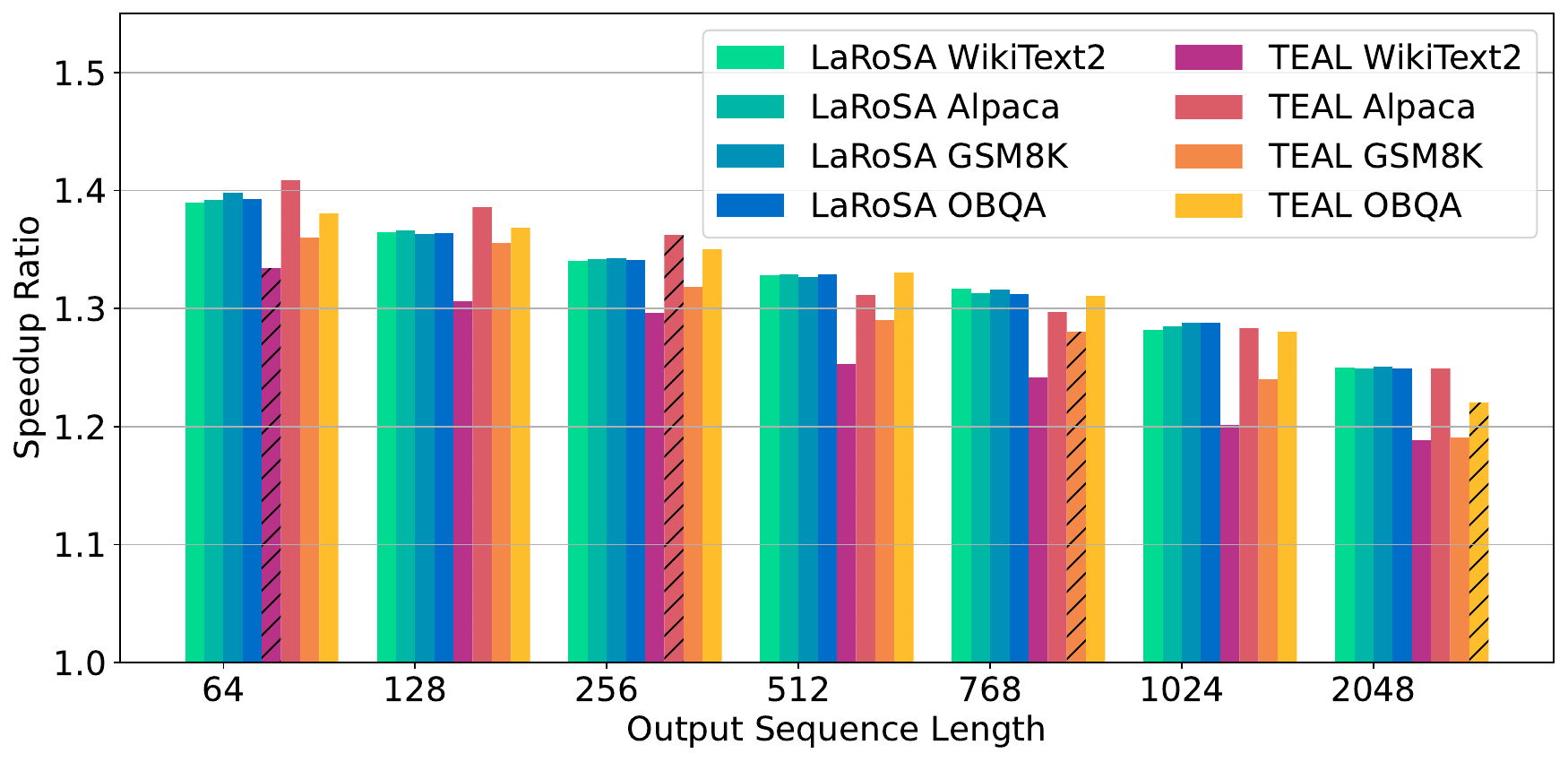}
   \caption{Inference Speed-up for Qwen2.5-7B.}
   \label{fig:larosa_speed_up_b}
\end{subfigure}
\caption{Comparison on inference speed-ups at 50\% sparsity. Experiments are conducted on NVIDIA A100 GPUs. } 
\label{fig:larosa_speed_up}
\vspace{-2.0em}
\end{figure}

\subsection{Ablation Study}
\label{subsec:ablation}
\textbf{Impact of Sparsification Function}.We ablate four activation sparsification functions in \name: \textbf{1) TEAL} refers to the function that uses a fixed cutoff threshold to prune low-magnitude entries, as described in \cref{eq:teal_sparse_function}. \textbf{2) TopK} denotes the approach of introducing sparsity directly using the TopK function. \textbf{3) TopK+GS} indicates an enhancement of TopK by incorporating Grid Search to obtain optimal coefficients $\alpha$ for different inputs within the same block. \textbf{4) LaRoSA} represents the complete method, which introduces a layerwise rotation matrix $ \mQ_l$ on top of TopK+GS. \cref{table:ablation1_sparse_function} shows that the actual sparsity of the TEAL exhibits fluctuations and fails to achieve a 50\% model-level sparsity. In contrast, TopK function can consistently maintain the actual sparsity of each input activation around 50\% and significantly reducing PPL. By introducing the optimal sparsity coefficient $\alpha$ found through Grid Search, the PPL loss can be further minimized while keep the overall model sparsity remains unchanged. Finally, incorporating layerwise orthogonal rotation matrices  further enhance model's performance.

\textbf{Model-level vs. Layer-level vs. Block-level Rotation}.
We comprehensively investigate the number and sources of rotation matrices and their impact on the performance and efficiency of sparsified models. Three types of rotation matrices are analyzed: \textbf{1) Model-level rotation}: This approach averages the covariance matrices of input across all layers to obtain a single and shared $\mQ_M$ for all layers, without adding residual adapters. \textbf{2) Block-level rotation}: Each Attention and MLP block in a layer receives its own rotation matrix $\mQ_B$, derived from its input's covariance matrix. Thus, a model with $L$ layers generates $2L$ rotation matrices and $2L$ residual adapters. \textbf{3) Layer-level rotation}: Used by LaRoSA, this strategy assigns a unique rotation matrix $\mQ_L$ to each layer based on its input covariance matrix, resulting in $L$ rotation matrices and $L$ residual adapters. As shown in \cref{table:layerwise_blockwise}, these rotation types result in different additional computation overhead and distinct improvements in PPL.  Although $\mQ_M$ does not introduce additional overhead, it actually results in worse performance compared to the baseline. 
Distinct $\mQ_B$ for each block significantly increases computational overhead, yet offers only marginal advantages over $\mQ_L$. Consequently, the layerwise rotation matrix $\mQ_L$ emerges as the optimal choice, as it increases limited computations with notable performance enhancements.

\begin{table}[t]
\setlength{\tabcolsep}{3pt}{
\small
\centering
\caption{Impact of different calibration datasets. The samples are used for calculating the covariance matrices for input activations.}
\vspace{0.2em}
\label{table:calibration_dataset}
\begin{tabular}{l|c|cc|cc}
\toprule
\textbf{Method} & \textbf{Calibration} & \multicolumn{2}{c}{\textbf{LLaMA3 8B}} & \multicolumn{2}{c}{\textbf{LLaMA2 7B}} \\ 
 & \textbf{Dataset} & PPL($\downarrow$) & Acc$_7$($\uparrow$) & PPL($\downarrow$) & Acc$_7$($\uparrow$)\\
\midrule
$\text{Dense}$  & - & 6.13 & 70.05 & 5.47 & 66.69 \\
\midrule
 $\name_{25\%}$ & WikiText2 & 6.13 & 69.54 & 5.51 & 66.39  \\
 $\name_{25\%}$ & Alpaca & 6.13 & 69.87 & 5.52 & 66.36  \\
\midrule
 $\name_{40\%}$ & WikiText2 & 6.60 & 68.79 & 5.64 & 66.15   \\
 $\name_{40\%}$ & Alpaca & 6.61 & 68.69 & 5.72 & 66.33 \\
\midrule
 $\name_{50\%}$ & WikiText2 & 7.22 & 67.19 & 5.87 & 64.61   \\
 $\name_{50\%}$ & Alpaca & 7.23 & 67.60 & 5.94 & 64.85 \\
\bottomrule
\end{tabular}}
\vspace{-1.0em}
\end{table}

\begin{table}[t]
    \setlength{\tabcolsep}{3pt}
    \small
    \centering
    \caption{Influence of calibration datasets on reasoning models. Cos.Simi. is obtained by averaging the cosine similarities of the covariance matrices across layers when using different calibration data. Experiments are performed on DS-R1-Distill-Qwen2.5-7B.}
    \begin{tabular}{l|c|c|c|c|c}
        \toprule
         \textbf{Method} & \textbf{Calibration} & \textbf{Cos.} & \textbf{0-shot} & \textbf{MATH-} & \textbf{GPQA-}  \\
        & \textbf{Dataset} & \textbf{Simi.} & \textbf{Acc$_7$} & \textbf{500} & \textbf{Diamond} \\
        \midrule
        $\name_{25\%}$ & Alpaca & 1.0 & 82.9 & 91.0 & 43.5 \\
        $\name_{25\%}$ & WikiText2 & 0.998 & 83.0 & 91.0 & 43.7 \\
        $\name_{25\%}$ & C4 & 0.997 & 82.9 & 91.2 & 43.5 \\
        \bottomrule
    \end{tabular}
    \label{tab:cali_data_ds_r1_qwen_7b}
    \vspace{-1.0em}
\end{table}

\begin{table}[t]
\setlength{\tabcolsep}{3pt}{
\small
\centering
\caption{Compatiblity of \name with 4-bit weight quantization.}
\vspace{0.2em}
\label{table:quant_models}
\begin{tabular}{l|c|cc|cc}
\toprule
\textbf{Precision} & \textbf{Quant.} & \multicolumn{2}{c}{\textbf{LLaMA3-8B}} & \multicolumn{2}{c}{\textbf{LLaMA2-7B}} \\ 
 & \textbf{Methods} & PPL($\downarrow$) & Acc$_7$($\uparrow$) & PPL($\downarrow$) & Acc$_7$($\uparrow$)\\
\midrule
$\text{FP16}$  & - & 6.13 & 70.05 & 5.47 & 66.69 \\
$\text{W4A16}$  & RTN & 8.52 & 65.32 & 5.73 & 62.54\\
$\text{W4A16}$  & GPTQ & 6.54 & 68.72 & 5.69 & 64.89 \\
$\text{W4A16}$  & AWQ & 6.48 & 68.95 & 5.60 & 65.12 \\
$\text{W4A16}$  & OmniQuant & 6.50 & 68.60 & 5.58 & 64.94 \\
\midrule
 $\name_{25\%}$ & - & 6.13 & 69.54 & 5.51 & 66.39  \\
 $\name_{25\%}$ & RTN & 8.64 & 64.90 & 5.81 & 62.19  \\
 $\name_{25\%}$ & GPTQ & 6.58 & 68.20 & 5.74 & 64.64  \\
 $\name_{25\%}$ & AWQ & 6.56 & 68.44 & 5.70 & 64.90  \\
 $\name_{25\%}$ & OmniQuant & 6.60 & 68.32 & 5.68 & 64.86 \\
\midrule
 $\name_{50\%}$ & - & 7.22 & 67.19 & 5.87 & 64.61  \\
 $\name_{50\%}$ & RTN & 9.54 & 62.40 & 6.10 & 60.43   \\
 $\name_{50\%}$ & GPTQ & 7.64 & 65.87 & 6.07 & 62.91 \\
 $\name_{50\%}$ & AWQ & 7.58 & 66.10 & 6.01 & 63.80  \\
 $\name_{50\%}$ & OmniQuant & 7.62 & 66.14 & 5.96 & 63.52  \\
\bottomrule
\end{tabular}}
\vspace{-1.0em}
\end{table}

\textbf{Influence of Calibration Datasets}. We apply \name to sparsify the LLaMA2-7B and LLaMA3-8B models using different calibration datasets (i.e. WikiText2 and Alpaca~\cite{alpaca}), with results shown in \cref{table:calibration_dataset}.  It can be observed that the choice of calibration datasets has a relatively minimal effect on sparsified model performance. This is due to our method utilize calibration data solely for calculating covariance matrices of each layer's input activation, rather than for empirical distribution computation as employed by methods like CATS and TEAL. Additionally, we validate the performance of LaRoSA within the reasoning model on complex tasks. As shown in Table \ref{tab:cali_data_ds_r1_qwen_7b}, the covariance matrices of input activations within the same layer exhibit a high degree of cosine similarity across different data inputs. This observation suggests that the acquisition of layerwise rotation matrices is not strongly correlated with the calibration data. The performance in zero-shot and more complex tasks further demonstrates that the calibration data has minimal impact on the model's performance.
This ablation study highlights the robustness of our \name approach.

\textbf{Compatiblity with Quantization}.
We demonstrate that LaRoSA also shows high compatibility with latest SOTA weight quantization methods. In detail, we experimented with LaRoSA using 4-bit quantization(W4A16) on model weights by employing RTN~\citep{llm_int8}, GPTQ~\citep{frantar-gptq}, AWQ~\citep{lin2023awq} and OmniQuant~\citep{shao2024omniquant} methods. \cref{table:quant_models} displays the perplexity and zero-shot accuracy results. With 4-bit GPTQ quantization, LaRoSA achieves 64.64\% zero-shot accuracy and 5.74 PPL for LLaMA2-7B at 25\% sparsity. These results are nearly on par with the the GPTQ quantized model's 64.89\% accuracy and 5.69 PPL. This indicates that the empirical errors from sparsity and quantization are somewhat orthogonal to each other. The compatibility of LaRoSA with weight quantization highlights potential efficiency improvements through optimized kernels that integrate both sparsification and quantization operations.

\section{Conclusion }
This work presents \name, a training-free activation sparsification method that significantly enhances the efficiency of LLM inference. By utilizing orthogonal rotation matrices and Top-K sparsification functions, \name effectively optimizes sparse patterns for input activations, achieving minimal performance degradation along with consistent inference acceleration. Extensive experiments demonstrate that \name is robust and effective across various model types, sizes, and sparsity levels, offering superior performance and notable improvements in wall-clock time speed-up. \name establishes new state-of-the-art results in activation sparsification scenarios, enhancing the deployment of efficient LLMs in resource-constrained environments.


\clearpage

\section*{Impact Statement}
This paper aims to contribute to the advancement of Machine Learning through the exploration of activation sparsification techniques. These techniques have the potential to enhance model efficiency and scalability, leading to broader applicability across various domains. While there are numerous possible societal implications of our work, we do not identify any that require specific emphasis at this time.

\bibliography{main}
\bibliographystyle{icml2025}

\clearpage
\appendix

\onecolumn
\section{Theoretical Analysis}
\label{sec:theoretical_analysis}

This section provides a detailed explanation of how LaRoSA utilizes orthogonal transformations and top-k sparsification, and why it achieves consistently lower empirical errors compared to TEAL~\citep{liu2024teal}.

TEAL derives the theoretical errors from magnitude-based sparsification under a restrictive assumption that weights and activations are independent and identically Gaussian distributed. However, we found that empirical errors are generally larger than theoretical errors, as shown in ~\cref{fig:theoretical_errors}, which may be due to the incorrect assumption of a Gaussian distribution.

In fact, the rotated activation
through orthogonal transformations follows an approximately Gaussian distribution. Specifically, elements of $\tilde{\vx} = \vx\mQ$ are the weighted averages of elements of $\vx$. That is, for any $i$, $\tilde{\vx}_i=\sum_{j=1}^{D_\text{in}}\vx_j\mQ_{ji}$, where $\sum_{j=1}^{D_\text{in}}\mQ_{ji}^2=1$. By the Central Limit Theorem, if $\vx_i$ is independent and identically distributed (i.i.d.) random variables with zero mean, then $\tilde{\vx}_i$ will follow an approximately zero-mean Gaussian distribution. 

After rotation, only top $k$ elements of $\tilde{\vx}$ are retained. It can be shown that under the assumption that $\tilde{\vx}$ and $\tilde{\mW}=\mW\mQ$ are are independent and identically Gaussian distributed, the theoretical error of top $k$ sparsification is determined solely by $k$.

\begin{theorem}
\label{theorem:error}
Let $\tilde{\vx}\in\mathbb{R}^{D_\text{in}}$, $\tilde{\mW}\in\mathbb{R}^{D_\text{in}\times D_\text{out}}$ are independent and identically Gaussian distributed, where $\tilde{\vx}_i \sim N(0, \sigma_x)$ and $\tilde{\mW}_{ji} \sim N(0, \sigma_w)$. Furthermore, for top-$k$ sparsification method, we assume $\mathbb{E}[\tilde{\vx}_i^2\big|\tilde{\vx}_i\notin Top_k(\tilde{\vx})] = \mathbb{E}[\tilde{\vx}_i^2\big|\lvert \tilde{\vx}_i\rvert <t_k]$, where $t_k$ satisfies $\mathbb{P}(\lvert \tilde{\vx}_i \rvert \geq t_k) = \frac{k}{D_\text{in}}$.
For sparsification function $S_k$, define $\vy^S=S_k(\tilde{\vx})\tilde{\mW}^T$, and $\vy=\tilde{\vx}\tilde{\mW}^T$, then the distributional relative error is given by:

\begin{align}
\label{eq:distributional relative error}
\frac{\mathbb{E}\|\vy-\vy^S\|_2}{\mathbb{E}\|\vy\|_2}
= \sqrt{1\hspace{-3pt}-\hspace{-3pt}\frac{k}{D_\text{in}}\hspace{-3pt}-\hspace{-3pt}2\Phi^{-1}\left(1\hspace{-3pt}-\hspace{-3pt}\frac{k}{2D_\text{in}}\right)\varphi\left(\Phi^{-1}\left(1\hspace{-3pt}-\hspace{-3pt}\frac{k}{2D_\text{in}}\right)\right)}
\end{align}

where $\varphi(t) = \frac{1}{\sqrt{2\pi}}e^{-\frac{1}{2}t^2}$ is the standard Gaussian probability density function, and $\Phi^{-1}$ is the inverse cumulative distribution function (CDF) of the standard Gaussian distribution.
\end{theorem}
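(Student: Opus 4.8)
The plan is to condition on the activation $\tilde{\vx}$, use the Gaussianity of $\tilde{\mW}$ to reduce both $\mathbb{E}\|\vy\|_2$ and $\mathbb{E}\|\vy-\vy^S\|_2$ to a common $\chi_{D_\text{out}}$ factor multiplying an activation-energy factor, and then evaluate the surviving truncated Gaussian moment in closed form.

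First I would write the output coordinatewise as $\vy_i=\sum_{j=1}^{D_\text{in}}\tilde{\vx}_j\tilde{\mW}_{ji}$, and note that top-$k$ sparsification retains exactly the index set $\mathcal{T}=\mathrm{Top}_k(\tilde{\vx})$, so the error coordinate is $\vy_i-\vy^S_i=\sum_{j\notin\mathcal{T}}\tilde{\vx}_j\tilde{\mW}_{ji}$. Conditioning on $\tilde{\vx}$ (which fixes $\mathcal{T}$) and using that the $\tilde{\mW}_{ji}$ are i.i.d. Gaussian and independent of $\tilde{\vx}$, each $\vy_i$ is conditionally $N(0,\sigma_w^2\|\tilde{\vx}\|_2^2)$ and each $\vy_i-\vy^S_i$ is conditionally $N(0,\sigma_w^2\sum_{j\notin\mathcal{T}}\tilde{\vx}_j^2)$, independently across $i$. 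Hence both norms factor as a scale times an independent chi variable,
\begin{equation*}
\|\vy\|_2 = \sigma_w\,\|\tilde{\vx}\|_2\,R, \qquad \|\vy-\vy^S\|_2 = \sigma_w\,\big(\sum\nolimits_{j\notin\mathcal{T}}\tilde{\vx}_j^2\big)^{1/2}\,R',
\end{equation*}
where $R,R'\sim\chi_{D_\text{out}}$ share the same law and are independent of $\tilde{\vx}$. Taking expectations, the common factor $\sigma_w\,\mathbb{E}[\chi_{D_\text{out}}]$ cancels, collapsing the relative error to the purely activation-side ratio $\mathbb{E}\big[(\sum_{j\notin\mathcal{T}}\tilde{\vx}_j^2)^{1/2}\big]/\mathbb{E}[\|\tilde{\vx}\|_2]$.

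Next I would pass from $\mathbb{E}[\|\cdot\|]$ to $\sqrt{\mathbb{E}[\|\cdot\|^2]}$ via concentration of the norms over the $D_\text{in}$ coordinates (exact in the large-$D_\text{in}$ limit), turning the ratio into $\sqrt{\mathbb{E}[\sum_{j\notin\mathcal{T}}\tilde{\vx}_j^2]/\mathbb{E}[\|\tilde{\vx}\|_2^2]}$. The denominator is $D_\text{in}\sigma_x^2$. For the numerator I would invoke the hypothesis $\mathbb{E}[\tilde{\vx}_i^2\mid i\notin\mathcal{T}]=\mathbb{E}[\tilde{\vx}_i^2\mid|\tilde{\vx}_i|<t_k]$ with $\mathbb{P}(|\tilde{\vx}_i|\ge t_k)=k/D_\text{in}$, so the expected pruned energy equals $D_\text{in}\,\mathbb{E}[\tilde{\vx}_1^2\mathbf{1}(|\tilde{\vx}_1|<t_k)]$. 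Setting $Z=\tilde{\vx}_1/\sigma_x\sim N(0,1)$ and $\tau=t_k/\sigma_x$, the threshold condition $\mathbb{P}(|Z|\ge\tau)=k/D_\text{in}$ gives $\Phi(\tau)=1-k/(2D_\text{in})$, i.e. $\tau=\Phi^{-1}(1-k/(2D_\text{in}))$, exactly the argument appearing in the claim.

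Finally I would evaluate $\mathbb{E}[Z^2\mathbf{1}(|Z|<\tau)]=\int_{-\tau}^{\tau}z^2\varphi(z)\,dz$ by integration by parts using $\varphi'(z)=-z\varphi(z)$, obtaining $2\Phi(\tau)-1-2\tau\varphi(\tau)$; substituting $2\Phi(\tau)-1=1-k/D_\text{in}$ and $\tau=\Phi^{-1}(1-k/(2D_\text{in}))$ produces the expression under the square root, with the $\sigma_x^2$ factors cancelling against the denominator. The main obstacle I anticipate is the replacement of $\mathbb{E}\|\cdot\|_2$ by $\sqrt{\mathbb{E}\|\cdot\|_2^2}$: the pruned-energy norm is taken over the \emph{data-dependent} bottom-$(D_\text{in}-k)$ order statistics rather than a fixed coordinate set, so making this rigorous needs a large-$D_\text{in}$ concentration argument, which is precisely why the hypothesis is phrased as an identity in expectation. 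Once the conditioning cleanly separates the $\tilde{\mW}$-randomness (the cancelling chi factor) from the $\tilde{\vx}$-randomness, everything else is routine Gaussian bookkeeping.
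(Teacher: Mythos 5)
Your proposal is correct and lands on the same truncated-Gaussian computation as the paper, but it routes the weight-side randomness differently. The paper never conditions on $\tilde{\vx}$: it works entirely with second moments, showing the cross-terms $\mathbb{E}[\hat{\vy}_j\hat{\vy}_k]$ vanish and computing $\mathbb{E}[\hat{\vy}_i^2]=\sigma_w^2\sum_j\mathbb{E}\bigl[(\tilde{\vx}_j-S_k(\tilde{\vx}_j))^2\bigr]$ directly, so that the stated ratio is really $\sqrt{\mathbb{E}\|\vy-\vy^S\|^2}/\sqrt{\mathbb{E}\|\vy\|^2}$ — that is what ``distributional relative error'' implicitly means, following TEAL's convention. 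You instead treat the numerator and denominator as the literal first moments $\mathbb{E}\|\cdot\|_2$, and your conditioning-plus-$\chi_{D_\text{out}}$ factorization is a nice exact cancellation that the paper does not have; but it forces you to then replace $\mathbb{E}\bigl[(\sum_{j\notin\mathcal{T}}\tilde{\vx}_j^2)^{1/2}\bigr]/\mathbb{E}\|\tilde{\vx}\|_2$ by the ratio of root-mean-squares via a large-$D_\text{in}$ concentration argument, a step the paper simply never needs because it defines the quantity at the level of squared norms. So your route is arguably more faithful to the formula as printed, at the cost of an asymptotic step the paper's bookkeeping avoids; both then reduce to the same identity $\mathbb{E}[Z^2\mathbf{1}(|Z|<\tau)]=2\Phi(\tau)-1-2\tau\varphi(\tau)$ with $\tau=\Phi^{-1}(1-k/(2D_\text{in}))$. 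One small point in your favor: you correctly read the hypothesis as an indicator-weighted expectation (``pruned energy''), which is what the paper's closed form actually uses even though its notation writes a conditional expectation.
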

\begin{proof}
define $\hat{\vy}=\vy-\vy^S$. Due to independence and zero mean of $\tilde{\mW}$, for any $j \neq k$ and any $i$, it follows that $\mathbb{E}[\tilde{\mW}_{ji}\tilde{\mW}_{ki}] = 0$, which leads to:
\begin{align}
\label{eq:cov0}
\mathbb{E}[\hat{\vy}_j \hat{\vy}_k] &=\mathbb{E}\left[\sum_{i_1=1}^{D_\text{in}}(\tilde{\vx}_{i_1}\hspace{-3pt}-\hspace{-3pt}S_{k}(\tilde{\vx}_{i_1})) \tilde{\mW}_{ji_1}\sum_{i_2=1}^{D_\text{in}} (\tilde{\vx}_{i_2}\hspace{-3pt}-\hspace{-3pt}S_{k}(\tilde{\vx}_{i_2})) \tilde{\mW}_{ki_2}\right] \notag\\
&=\mathbb{E}\left[\sum_{i_1=1}^{D_\text{in}}\sum_{i_2=1}^{D_\text{in}} (\tilde{\vx}_{i_1}\hspace{-3pt}-\hspace{-3pt}S_{k}(\tilde{\vx}_{i_1}))(\tilde{\vx}_{i_2}\hspace{-3pt}-\hspace{-3pt}S_{k}
(\tilde{\vx}_{i_2}))\tilde{\mW}_{ji_1}\tilde{\mW}_{ki_2} \right]  \notag\\
&= 0
\end{align}
Also for any $i$, according to Lemma A.1 in ~\citep{liu2024teal}, we have :
\begin{align}
\label{eq:expection1}
\mathbb{E}[\hat{\vy}_i^2] &= \mathbb{E}\left[\sum_{j=1}^{D_\text{in}}\left(\left(\tilde{\vx}_j-S_k(\tilde{\vx}_j)\right)\tilde{\mW}_{ji}\right)^2\right] \notag\\
&=\sigma_w^2\sum_{j=1}^{D_\text{in}}\mathbb{E}\left[\tilde{\vx}_j-S_k(\tilde{\vx}_j)\right]^2 \notag \\
&=\sigma_w^2\sum_{j=1}^{D_\text{in}}\mathbb{E}[\tilde{\vx}_j^2\big|\tilde{\vx}_j\notin Top_k(\tilde{\vx})] \notag \\
&=\sigma_w^2\sum_{j=1}^{D_\text{in}}\mathbb{E}\left[\tilde{\vx}_j^2\big|\lvert \tilde{\vx}_j\rvert <t_k\right] \notag\\
&= D_\text{in}\sigma_x^2\sigma_w^2\left(1-\frac{k}{D_\text{in}}-\frac{2t_k}{\sigma_x}\varphi\left(\frac{t_k}{\sigma_x}\right)\right)
\end{align}
On the other hand, since $\tilde{\vx}_i$ are Gaussian distributed, $1-\frac{k}{D_\text{in}} = \mathbb{P}(\lvert \tilde{\vx}_i \rvert \leq t_k) = \mathbb{P}(\lvert z \rvert \leq \frac{t_k}{\sigma_x}) = 2\Phi(\frac{t_k}{\sigma_x}) - 1$, where $z$ is a standard Gaussian random variable. It follows that:
\begin{align}
\frac{t_k}{\sigma_x} = \Phi^{-1}\left(1-\frac{k}{2D_\text{in}}\right)
\end{align}
As a result, we have:
\begin{align}
\mathbb{E}\|\hat{\vy}\|^2 &= \sum_{i=1}^{D_\text{out}}\mathbb{E}[\hat{\vy}_i]^2 \notag\\
&= D_\text{in}D_\text{out}\sigma_x^2\sigma_w^2
\left(1-\frac{k}{D_\text{in}} - 2\Phi^{-1}\left(1-\frac{k}{2D_\text{in}}\right)\varphi\left(\Phi^{-1}\left(1-\frac{k}{2D_\text{in}}\right)\right)\right) \label{conclusion1}
\end{align}
Note that 
\begin{equation}
\label{conclusion2}
\mathbb{E}\|\vy\|^2 = D_\text{in}D_\text{out}\sigma_x^2\sigma_w^2
\end{equation}
as the unsparsified case. \cref{conclusion1} and \cref{conclusion2} lead to the final result.
\end{proof}

\begin{figure*}[t]
\centering
\includegraphics[width=0.6\textwidth]{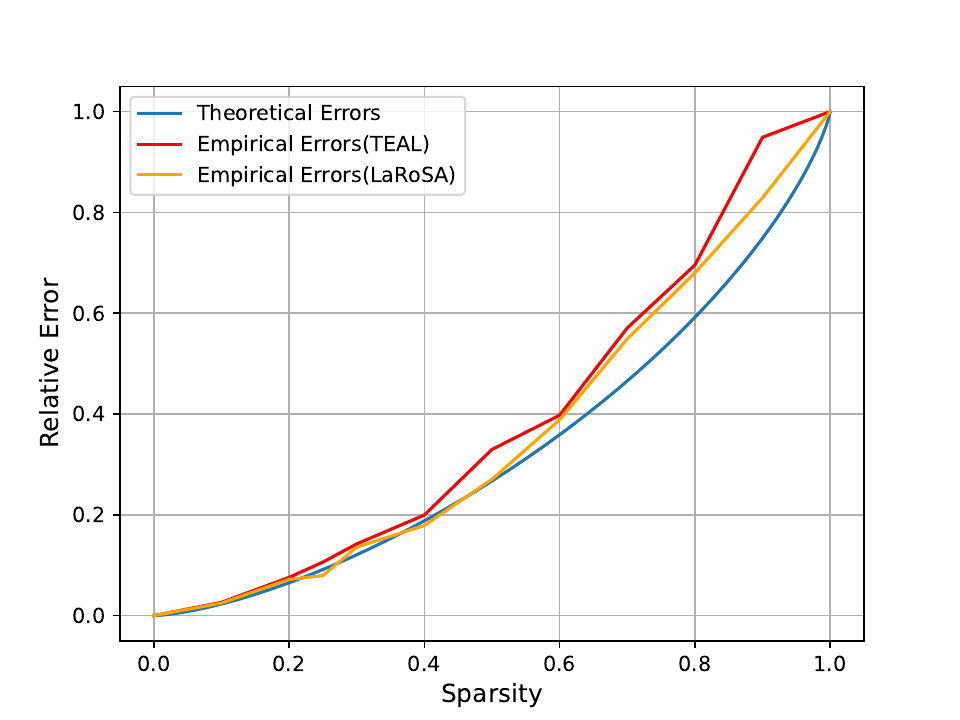}
\caption{
Relative output error results from 14th Layer of LLaMA3-8B. Theoretical errors are calculated with Top-K based sparsification. Empirical errors are derived from the output of $\mW_\text{down}$ projection.
\label{fig:theoretical_errors}}
\end{figure*}
As illustrated in ~\cref{fig:theoretical_errors}, the improved Gaussian distribution assumption results in LaRoSA having consistent lower empirical errors compared to TEAL, bringing our proposed method closer to the theoretical errors.
We employ magnitude-based top-K method instead of pruning the last few elements after rotation like SliceGPT~\citep{ashkboos2024slicegpt}, which ensures the validity of the aforementioned theoretical error analysis. Our experiments across multiple models demonstrate that, after rotating into the principal component space, most small-magnitude elements of activation are indeed concentrated in the last few dimensions.

\clearpage

\section{Grid Search Optimized Top-K}
\label{sec:appendix_grid_search}
In the work of TEAL, it was noted that even under a target constraint of 50\% layer sparsity, the sparsity of the four input activations (i.e., $\vh_1, \vh_2, \vh_3,$ and $\vh_4$) still exhibit differing numerical distributions. Our exploration of the TopK approach, based on \name's framework, also revealed that forcing these four input activations to adopt the same TopK value can lead to suboptimal sparsity patterns in some models, even when corrective rotation matrices are introduced. To address this issue, we conducted a straightforward grid search for the sparsity coefficients $\alpha_1, \alpha_2, \alpha_3,$ and $\alpha_4$ that determines the allocation weights for $\vh_1, \vh_2, \vh_3,$ and $\vh_4$ across all layers. To ensure that the sparsity within the block aligns with the model-level sparsity, we need to consider the proportional relationship between the dimension size of $\vh_1$ and $\vh_2$ (as well as $\vh_3$ and $\vh_4$). Specifically, for input activations of Q, K, V, and O projections, we have:
\begin{equation}
    3 * \alpha_1 + \alpha_2 = 4.
\end{equation}
For input activations in Up, Gate, and Down projections, we have
\begin{equation}
    2 * \alpha_3 + M * \alpha_4 = 2 + M,
\end{equation}
where $M$ represents the ratio of the intermediate size to the hidden size. Thus, given $\alpha_1$ and $\alpha_3$, we have:
\begin{equation}
    \alpha_2 = 4 - 3 * \alpha_1,
\end{equation}
\begin{equation}
    \alpha_4 = (2 + M - 2 * \alpha_3) / M.
\end{equation}
After obtaining the optimal coefficients $\alpha$ for $\vh_1, \vh_2, \vh_3,$ and $\vh_4$, we apply the same allocation weights across all layers. The experimental results in \cref{table:app_grid_search} demonstrated that our sparsity coefficients search strategy effectively improves the performance of the sparsified model. In \cref{table:app_sparsity_coefficients}, we provide a detailed presentation of the sparsity coefficients used across various models.
\begin{table}[h]
\setlength{\tabcolsep}{12.0pt}
\centering
\caption{LLaMA3-8B perplexity results on different sets of coefficients $\alpha$ for $\vh_1$, $\vh_2$, $\vh_3$, and $\vh_4$}
\label{table:app_grid_search}
\scalebox{1.0}{
\begin{tabular}{l|cc|cc|c}
\toprule
\textbf{Method} & \multicolumn{2}{c}{\textbf{Attention}} & \multicolumn{2}{c}{\textbf{MLP}} & PPL  \\ 
\cmidrule{2-6}
  & $\vh_1$ & $\vh_2$ & $\vh_3$ & $\vh_4$ &\\

\midrule
 $\text{TopK}_{60\%}$ & 1.00 & 1.00 & 1.00 & 1.00  & 9.09 \\
 $\text{TopK+GS}_{60\%}$ & 0.90 & 1.30 & 0.90 & 1.06 & 8.98 \\
 $\text{TopK+GS}_{60\%}$ & 0.85 & 1.45 & 0.70 & 1.17 & 8.94 \\
 $\text{TopK+GS}_{60\%}$ & 0.80 & 1.60 & 0.80 & 1.12 & 8.86 \\
 $\text{TopK+GS}_{60\%}$ & 0.75 & 1.75 & 0.75 & 1.14 & 8.93 \\
 $\name_{60\%}$ & 0.80 & 1.60 & 0.80 & 1.12 & \textbf{8.57} \\
\bottomrule
\end{tabular}
}
\end{table}

\begin{table}[h]
\setlength{\tabcolsep}{12.0pt}
\centering
\caption{Optimal sparsity coefficients $\alpha$ of $\vh_1$, $\vh_2$, $\vh_3$, and $\vh_4$ used in our proposed \name.}
\label{table:app_sparsity_coefficients}
\vspace{0.2em}
\scalebox{1.0}{
\begin{tabular}{l|cc|ccc}
\toprule
\textbf{Models} & \multicolumn{2}{c}{\textbf{Attention}} & \multicolumn{3}{c}{\textbf{MLP}}  \\ 
\cmidrule{2-6}
  & $\vh_1$ & $\vh_2$ & $\vh_3$ & $\vh_4$ & \textbf{M} \\
\midrule
 LLaMA2-7B & 0.90& 1.30& 0.80 & 1.15  &  2.6875\\
 LLaMA2-70B & 0.80 & 1.60 & 0.80 & 1.12 & 3.5000\\
 LLaMA3-8B  & 0.80 & 1.60 & 0.80 & 1.12 & 3.5000\\
 LLaMA3-70B  & 0.85 & 1.45 & 0.75 & 1.14 & 3.5000\\
 Mistral-7B & 1.00 & 1.00 & 0.80 & 1.12 & 3.5000\\
 Qwen2.5-7B & 0.80 & 1.60 & 0.80 & 1.08 & 5.2857\\
 Qwen2.5-72B & 0.80 & 1.60 & 0.80 & 1.11 & 5.2857\\
\bottomrule
\end{tabular}
}
\end{table}

\begin{algorithm}
\caption{Grid Search for Optimal Sparsity Coefficients}
\begin{algorithmic}[1]
\label{algo:grid_search}
\STATE \textbf{Input:} Initial Top-K function value $K$, Ratio of the intermediate size to the hidden size $M$, Step size $s = 0.05$

\STATE \textbf{Output:} Optimal sparsity coefficients $\alpha_1^*$, $\alpha_2^*$, $\alpha_3^*$, $\alpha_4^*$
\STATE Initialize best\_PPL: $best\_ppl \leftarrow +\infty$
\STATE Initial sparsity coefficients $\alpha_1=0.7$, $\alpha_3=0.7$

    
        \WHILE{$\alpha_1 \leq 1.2$}
            \STATE  $\alpha_2 \gets  4 - 3 * \alpha_1$

            \WHILE{$\alpha_3 \leq 1.2$}
            \STATE  $\alpha_4 \gets (2 + M - 2 * \alpha_3) / M$

            \STATE Compute current\_PPL with new Top-K function value \{${ \alpha_1 K , \alpha_2 K,  \alpha_3 K,  \alpha_4 K}$\} for  $\vh_1, \vh_2, \vh_3$,  $\vh_4$

                \IF{current\_PPL $<$ best\_PPL}
                    \STATE $\text{best\_PPL} \gets \text{current\_PPL}$
                    \STATE $\alpha_1^* \gets \alpha_1$
                    \STATE $\alpha_3^* \gets \alpha_3$
                \ENDIF
            \STATE $\alpha_3 \gets \alpha_3 + s$ 
        \ENDWHILE
        \STATE $\alpha_1 \gets \alpha_1 + s$
        \ENDWHILE

\STATE $\alpha_2^* \gets  4 - 3 * \alpha_1^*$
\STATE $\alpha_4^* \gets  (2 + M - 2 * \alpha_3^*) / M$
\STATE \textbf{Return} $\alpha_1^*, \alpha_2^*$, $\alpha_3^*, \alpha_4^*$ 
\end{algorithmic}
\end{algorithm}

\clearpage
\section{Full Experimental Results}
\label{sec:app_full_experiments}

\begin{table*}[h]
\setlength{\tabcolsep}{4pt}
\centering
\caption{Full results of \name in zero-shot and few-shot tasks for LLaMA2-7B. }
\vspace{0.2em}\label{tab:main_1}
\setlength{\tabcolsep}{0.8mm}
{
\begin{tabular}{c|c|ccccccc|c|c}
\noalign{\vspace{0.1em}}\hline\noalign{\vspace{0.1em}}
\hline\noalign{\vspace{0.1em}}

 \textbf{Sparsity} & \textbf{Method} & WinoGrande & PiQA & OBQA & HellaSwag & BoolQ & ARC-E & ARC-C & \textbf{Avg} & MMLU \\
 & Metrics & acc & acc\_norm & acc\_norm & acc\_norm& acc & acc\_norm& acc\_norm & & acc \\
\noalign{\vspace{0.1em}}\hline\noalign{\vspace{0.1em}}
 & Baseline & 69.14 & 79.05 & 44.20 & 75.98 & 77.71 & 74.54 & 46.25 & 66.69 & 45.85\\
\hline
\multirow{3}{*}{25\%} & CATS & 67.64 & 77.80 & 41.20 & 75.75 & 70.55 & 69.70 & 43.34 & 63.71 & 42.76 \\
 & TEAL & 67.95 & 78.18 & 44.00 & 75.20 & 76.71 & 73.91 & 45.64 & 65.94 & 44.66\\
 & \name & 68.43 & 79.05 & 44.00 & 75.84 & 77.22 & 74.66 & 45.56 & 66.39 & 45.66 \\
\hline
  \multirow{3}{*} {40\%}& CATS & 56.12 & 66.87 & 32.80 & 52.28 & 63.21 & 44.32 & 31.23 & 49.55 & 24.67 \\
  & TEAL & 66.93 & 77.96 & 42.60 & 74.09 & 75.90 & 73.31 & 43.68 & 64.92 & 43.46 \\
  & \name & 68.90 & 78.18 & 44.20 & 75.35 & 76.85 & 73.86 & 45.73 & 66.15 & 44.66\\
\hline
\multirow{2}{*}{50\%} & TEAL & 65.27 & 77.53 & 41.20 & 71.39 & 73.33 & 71.17 & 42.66 & 63.22 & 39.57\\
  & \name & 67.32 & 77.69 & 43.40 & 74.37 & 73.85 & 72.31 & 43.34 & 64.61 & 43.10\\
\noalign{\vspace{0.1em}}\noalign{\vspace{0.1em}}

\hline
\end{tabular}}
\vspace{-2.0em}
\end{table*}

\begin{table*}[h]
\setlength{\tabcolsep}{4pt}
\centering
\caption{Full results of \name in zero-shot and few-shot tasks for LLaMA2-70B. }
\vspace{0.2em}\label{tab:main_2}
\setlength{\tabcolsep}{0.8mm}
{
\begin{tabular}{c|c|ccccccc|c|c}
\noalign{\vspace{0.1em}}\hline\noalign{\vspace{0.1em}}
\hline\noalign{\vspace{0.1em}}

\textbf{Sparsity} & \textbf{Method} & WinoGrande & PiQA & OBQA & HellaSwag & BoolQ & ARC-E & ARC-C & Avg & MMLU \\
 & & acc & acc\_norm & acc\_norm & acc\_norm& acc & acc\_norm& acc\_norm & & acc\\
\noalign{\vspace{0.1em}}\hline\noalign{\vspace{0.1em}}
& Baseline & 77.97 & 82.80 & 48.80 & 83.81 & 83.79 & 81.10 & 57.33 & 73.66 & 68.80\\
\hline
\multirow{3}{*}{25\%} & CATS & 76.32 & 82.26 & 49.20 & 84.22 & 79.94 & 79.63 & 57.59 & 72.74  & 67.50\\
 & TEAL & 78.56 & 82.42 & 48.80 & 82.66 & 83.25 & 81.26 & 56.24 & 73.31 & 67.90\\
 & \name & 57.17 & 80.09 & 83.88 & 83.63 & 48.60 & 82.70 & 77.58 & 73.38  & 68.74\\
\hline
  \multirow{3}{*} {40\%}& CATS & 67.25 & 77.69 & 43.00 & 75.11 & 71.41 & 61.11 & 43.60 & 62.74 & 55.83 \\
  & TEAL & 76.21 & 81.65 & 48.30 & 82.02 & 82.94 & 80.87 & 55.30 & 72.47 & 66.78  \\
  & \name & 77.43
 & 83.57 & 47.40 & 83.37  & 83.36 & 80.51 & 57.76 & 73.31 &  68.16\\
\hline
\multirow{2}{*}{50\%}& TEAL & 75.74 & 81.33 & 47.40 & 80.67 & 81.80 & 80.35 & 56.16 & 71.92 & 64.43  \\
 & \name & 76.80 & 82.86 & 48.80 & 82.99 & 82.78 & 79.63 & 56.14 & 72.86 & 67.57 \\
\noalign{\vspace{0.1em}}\noalign{\vspace{0.1em}}

\hline
\end{tabular}}
\vspace{-1em}
\end{table*}

\begin{table*}[h]
\setlength{\tabcolsep}{4pt}
\centering
\caption{Full results of \name in zero-shot and few-shot tasks for LLaMA3-8B. }
\vspace{0.2em}\label{tab:main_3}
\setlength{\tabcolsep}{0.8mm}
{
\begin{tabular}{c|c|ccccccc|c|c}
\noalign{\vspace{0.1em}}\hline\noalign{\vspace{0.1em}}
\hline\noalign{\vspace{0.1em}}

\textbf{Sparsity} & \textbf{Method} & WinoGrande & PiQA & OBQA & HellaSwag & BoolQ & ARC-E & ARC-C & Avg & MMLU \\
 & & acc & acc\_norm & acc\_norm & acc\_norm& acc & acc\_norm& acc\_norm & & acc\\
\noalign{\vspace{0.1em}}\hline\noalign{\vspace{0.1em}}
& Baseline & 73.64 & 80.41 & 44.80 & 79.13 & 81.10 & 77.78 & 53.50 & 70.05 & 65.26\\
\hline
\multirow{3}{*}{25\%} & CATS & 70.24 & 79.33 & 44.40 & 77.38 & 79.17 & 73.70 & 49.40 & 67.66 & 61.85 \\
 & TEAL & 72.48 & 80.36 & 45.40 & 78.45 & 80.07 & 77.08 & 52.00 & 69.40 & 63.85 \\
 & \name & 72.93 & 80.63 & 45.00 & 79.09 & 80.40 & 76.68 & 52.05 & 69.54 & 64.85 \\
\hline
  \multirow{3}{*} {40\%}& CATS & 60.54 & 73.12 & 36.00 & 66.82 & 59.14 & 52.78 & 37.37 & 55.11 & 31.82 \\
  & TEAL & 70.34 & 78.51 & 45.80 & 76.55 & 79.57 & 76.47 & 49.74 & 68.14 & 59.84 \\
  & \name & 71.11 & 79.87 & 44.00 & 77.85 & 80.09 & 76.98 & 51.62 & 68.79 & 62.61\\
\hline
\multirow{2}{*}{50\%} & TEAL & 66.71 & 77.09 & 42.80 & 72.61 & 76.26 & 72.71 & 46.24 & 64.92 & 52.78 \\
  & \name & 68.98 & 79.98 & 44.20 & 76.85 & 77.55 & 74.75 & 48.04 & 67.19 & 58.65\\
\noalign{\vspace{0.1em}}\noalign{\vspace{0.1em}}

\hline
\end{tabular}}
\vspace{-1em}
\end{table*}

\begin{table*}[h]
\setlength{\tabcolsep}{4pt}
\centering
\caption{Full results of \name in zero-shot and few-shot tasks for LLaMA3-70B. }
\vspace{0.2em}\label{tab:main_4}
\setlength{\tabcolsep}{0.8mm}
{
\begin{tabular}{c|c|ccccccc|c|c}
\noalign{\vspace{0.1em}}\hline\noalign{\vspace{0.1em}}
\hline\noalign{\vspace{0.1em}}

\textbf{Sparsity} & \textbf{Method} & WinoGrande & PiQA & OBQA & HellaSwag & BoolQ & ARC-E & ARC-C & Avg & MMLU \\
 & & acc & acc\_norm & acc\_norm & acc\_norm& acc & acc\_norm& acc\_norm & & acc\\
\noalign{\vspace{0.1em}}\hline\noalign{\vspace{0.1em}}
& Baseline & 80.11 & 84.49 & 48.80 & 84.97 & 85.22 & 86.15 & 64.33 & 76.29 & 78.71\\
\hline
\multirow{3}{*}{25\%} & CATS & 79.87 & 84.06 & 47.20 & 85.21 & 84.43 & 85.19 & 63.14 & 75.58 & 77.93 \\
 & TEAL & 76.63 & 82.31 & 48.2 & 81.82 & 83.24 & 81.64 & 58.78  & 73.23 & 74.86  \\
  & \name & 81.45 & 84.39 & 49.00 & 85.07 & 84.89 & 85.65 & 63.65 & 76.30 & 78.13 \\
\hline
  \multirow{3}{*} {40\%}& CATS & 74.51 & 81.77 & 46.40 & 83.46 & 81.44 & 79.17 & 57.76 & 72.07 & 72.12\\
  & TEAL & 76.32 & 81.93 & 45.80 & 81.56 & 80.64 & 81.90 & 57.50 & 72.24 & 73.23\\

   & \name & 79.72 & 83.62 & 48.40 & 84.71 & 85.05 & 84.26 & 62.12 & 75.41  & 77.62\\
\hline
\multirow{2}{*}{50\%} & TEAL & 73.08 & 80.30 & 43.60 & 79.90 & 81.74 & 80.17 & 56.82 & 70.80 & 69.20 \\
  & \name & 77.90 & 82.59 & 46.40 & 84.24 & 84.10 & 82.49 & 58.96 & 73.81 & 76.51\\
\noalign{\vspace{0.1em}}\noalign{\vspace{0.1em}}

\hline
\end{tabular}}
\vspace{-1em}
\end{table*}

\begin{table*}[h]
\setlength{\tabcolsep}{4pt}
\centering
\caption{Full results of \name in zero-shot and few-shot tasks for Mistral-7B-v0.3. }
\vspace{0.2em}\label{tab:main_4}
\setlength{\tabcolsep}{0.8mm}
{
\begin{tabular}{c|c|ccccccc|c|c}
\noalign{\vspace{0.1em}}\hline\noalign{\vspace{0.1em}}
\hline\noalign{\vspace{0.1em}}

\textbf{Sparsity} & \textbf{Method} & WinoGrande & PiQA & OBQA & HellaSwag & BoolQ & ARC-E & ARC-C & Avg & MMLU \\
 & & acc & acc\_norm & acc\_norm & acc\_norm& acc & acc\_norm& acc\_norm & & acc \\
\noalign{\vspace{0.1em}}\hline\noalign{\vspace{0.1em}}
& Baseline & 73.80 & 82.31 & 44.00 & 80.40 & 82.11 & 78.40 & 52.04 & 70.44 & 62.34\\
\hline
\multirow{3}{*}{25\%} & CATS & 71.27 & 80.47 & 33.00 & 60.68 & 79.42 & 78.28 & 48.72 & 64.55 & 59.83 \\
 & TEAL & 72.61 & 81.82 & 44.40 & 80.05 & 81.44 & 78.24 & 51.86 & 70.06 & 61.51 \\
 & \name & 72.85 & 82.15 & 44.40 & 80.14 & 81.71 & 78.49 & 52.05 & 70.25 & 61.81 \\
\hline
  \multirow{3}{*} {40\%}& CATS & 61.96 & 74.43 & 34.80 & 74.71 & 74.86 & 55.13 & 41.47 & 59.62 & 44.31\\
  & TEAL & 69.55 & 80.36 & 44.40 & 79.08 & 80.49 & 77.14 & 50.30 & 68.76 & 60.17  \\
 & \name & 70.88 & 81.07 & 43.40 & 79.63 & 81.68 & 78.32 & 51.11 & 69.44 & 61.15\\
\hline
\multirow{2}{*}{50\%}  & TEAL & 68.71 & 79.52 & 40.40 & 76.97 & 79.55 & 74.34 & 47.63 & 66.73 & 57.34 \\
  & \name & 69.93 & 81.61 & 43.20 & 78.93 & 80.92 & 75.55 & 49.06 & 68.46 & 58.80 \\
\noalign{\vspace{0.1em}}\noalign{\vspace{0.1em}}

\hline
\end{tabular}}
\vspace{-1em}
\end{table*}

\begin{table*}[h]
\setlength{\tabcolsep}{4pt}
\centering
\caption{Full results of \name in zero-shot and few-shot tasks for Qwen2.5-7B. }
\vspace{0.2em}\label{tab:main_5}
\setlength{\tabcolsep}{0.8mm}
{
\begin{tabular}{c|c|ccccccc|c|c}
\noalign{\vspace{0.1em}}\hline\noalign{\vspace{0.1em}}
\hline\noalign{\vspace{0.1em}}

\textbf{Sparsity} & \textbf{Method} & WinoGrande & PiQA & OBQA & HellaSwag & BoolQ & ARC-E & ARC-C & Avg & MMLU \\
 & & acc & acc\_norm & acc\_norm & acc\_norm& acc & acc\_norm& acc\_norm & & acc \\
\noalign{\vspace{0.1em}}\hline\noalign{\vspace{0.1em}}
& Baseline & 72.77 & 79.98 & 47.60 & 78.91 & 84.56 & 77.44 & 51.11 & 70.34 & 74.21\\
\hline
\multirow{3}{*}{25\%} & CATS & 70.64 & 79.54 & 46.20 & 78.90 & 83.15 & 78.28 & 50.94 & 69.66 & 72.67\\
 & TEAL & 72.12 & 79.22 & 46.40 & 78.49 & 84.40 & 76.85 & 50.85 & 69.76 & 73.21\\
 & \name & 72.69 & 79.92 & 46.60 & 78.77 & 84.40 & 77.02 & 51.45 & 70.12 & 73.74  \\
\hline
  \multirow{3}{*} {40\%}& CATS & 58.72 & 76.12 & 38.60 & 72.49 & 74.01 & 67.89 & 44.97 & 61.83  & 63.99\\
  & TEAL & 70.30 & 78.78 & 45.00 & 76.78 & 84.05 & 75.44 & 49.96 & 68.61  & 71.44\\
  & \name & 70.72 & 79.38 & 46.20 & 78.28 & 84.65 & 77.86 & 50.60 & 69.67 & 72.33\\
\hline
\multirow{2}{*}{50\%} & TEAL & 68.90 & 77.18 & 44.00 & 74.60 & 83.76 & 76.59 & 49.34 & 67.76 & 68.53 \\
  & \name & 70.24 & 79.11 & 45.60 & 77.20 & 83.55 & 78.03 & 49.91 & 69.09 & 70.09\\
\noalign{\vspace{0.1em}}\noalign{\vspace{0.1em}}

\hline
\end{tabular}}
\vspace{-1em}
\end{table*}

\begin{table*}[h]
\setlength{\tabcolsep}{4pt}
\centering
\caption{Full results of \name in zero-shot and few-shot tasks for Qwen2.5-72B. }
\vspace{0.2em}\label{tab:main_6}
\setlength{\tabcolsep}{0.8mm}
{
\begin{tabular}{c|c|ccccccc|c|c}
\noalign{\vspace{0.1em}}\hline\noalign{\vspace{0.1em}}
\hline\noalign{\vspace{0.1em}}

\textbf{Sparsity} & \textbf{Method} & WinoGrande & PiQA & OBQA & HellaSwag & BoolQ & ARC-E & ARC-C & Avg & MMLU \\
 & & acc & acc\_norm & acc\_norm & acc\_norm& acc & acc\_norm& acc\_norm & & acc\\
\noalign{\vspace{0.1em}}\hline\noalign{\vspace{0.1em}}
& Baseline & 77.90 & 83.62 & 46.40 & 86.05 & 89.14 & 83.33 & 62.62 & 75.58 & 86.08\\
\hline
\multirow{3}{*}{25\%} & CATS & 73.72 & 82.86 & 45.40 & 86.60 & 82.57 & 79.92 & 58.36 & 72.77 & 84.91 \\
 & TEAL & 78.11 & 83.39 & 46.50 & 85.23 & 87.63 & 82.62 & 61.87 & 75.05 & 85.44 \\
 & \name & 77.66 & 83.73 & 46.80 & 85.97 & 89.17 & 82.83 & 62.54 & 75.53 & 85.62\\
\hline
  \multirow{3}{*} {40\%}& CATS & 65.59 & 78.56 & 33.20 & 61.81 & 82.14 & 74.71 & 45.82 & 63.12  & 80.95\\
  & TEAL & 77.47 & 83.01 & 46.50 & 84.52 & 87.52 & 82.43 & 61.11 & 74.65 & 84.80 \\
  & \name & 78.45 & 83.03  & 47.20  & 86.01 &  89.48  & 83.12  & 62.46 & 75.35 & 85.33\\
\hline
\multirow{2}{*}{50\%} & TEAL & 77.32 & 82.25 & 45.00 & 83.03 & 87.54 & 81.61 & 59.46 & 73.74 & 83.54\\
  & \name & 76.48  & 83.57  & 45.60 &  85.81 & 89.51  & 83.59  & 61.69 & 75.18 & 84.34\\
\noalign{\vspace{0.1em}}\noalign{\vspace{0.1em}}

\hline
\end{tabular}}
\vspace{-1em}
\end{table*}

\begin{table*}[h]
\setlength{\tabcolsep}{10pt}
\centering
\caption{The change in TFLOPs of the model after adding the residual adaptor. }
\label{table:extra_tflops}
\vspace{0.2em}
\setlength{\tabcolsep}{0.8mm}
{
\begin{tabular}{l|cc|cc}
\toprule
  &\multicolumn{2}{c}{Prefill} & \multicolumn{2}{c}{Decode}   \\
\midrule
\textbf{Model} & Original & \name & Original & \name  \\
 & (TFLOPs) & (TFLOPs) & (TFLOPs) & (TFLOPs)  \\

\midrule
  LLaMA2-7B & 29.26(1.00$\times$) & 31.46(1.07$\times$) & 29.26(1.00$\times$) & 31.46(1.07$\times$)  \\
  LLaMA2-70B & 292.44(1.00$\times$) & 314.43(1.07$\times$) & 292.45(1.00$\times$) & 314.44(1.07$\times$)  \\
\midrule
  LLaMA3-8B & 32.94(1.00$\times$) & 35.14(1.06$\times$) & 32.94(1.00$\times$) & 35.14(1.06$\times$)   \\
  LLaMA3-70B & 295.67(1.00$\times$) & 317.66(1.07$\times$) & 295.68(1.00$\times$) & 317.67(1.07$\times$) \\
\midrule
  Mistral-7B & 31.34(1.00$\times$) & 33.53(1.07$\times$) & 31.34(1.00$\times$) & 33.54(1.07$\times$) \\
\midrule
  Qwen2.5-7B & 30.64(1.00$\times$) & 32.11(1.05$\times$) & 30.64(1.00$\times$) & 32.12(1.05$\times$) \\
  Qwen2.5-72B & 303.69(1.00$\times$) & 32.57(1.07$\times$) & 303.69(1.00$\times$) & 325.68(1.07$\times$) \\
\bottomrule
\end{tabular}}
\end{table*}


\end{document}